\documentclass[10pt,a4paper,logo]{googledeepmind}

\usepackage{mathtools}
\usepackage{CJKutf8}
\usepackage{multirow}
\usepackage{float}
\usepackage{natbib}
\usepackage{listings}
\usepackage{multicol}
\usepackage{tikz}
\usetikzlibrary{arrows.meta,positioning,fit,calc}
\usepackage{etoolbox}

\newtheorem{proposition}{Proposition}[section]

\usepackage{url}
\usepackage[normalem]{ulem}
\usepackage{longtable}
\usepackage{algorithm,algpseudocode}
\setcitestyle{authoryear,round}
\algrenewcommand{\algorithmicrequire}{\textbf{Input:}}
\algrenewcommand{\algorithmicensure}{\textbf{Output:}}
\newtheorem{lemma}[proposition]{Lemma}

\usepackage{amsmath,amsfonts,bm}

\def\eqref#1{equation~\ref{#1}}

\def\1{\bm{1}}

\DeclareMathAlphabet{\mathsfit}{\encodingdefault}{\sfdefault}{m}{sl}
\SetMathAlphabet{\mathsfit}{bold}{\encodingdefault}{\sfdefault}{bx}{n}

\AtBeginDocument{\setlength{\emergencystretch}{3em}}

\let\PtwoOriginalTable\table
\let\PtwoOriginalEndTable\endtable
\renewenvironment{table}[1][!htbp]
  {\PtwoOriginalTable[!htbp]}
  {\PtwoOriginalEndTable}
\usepackage{placeins}

\newcommand{\blockmethod}{P2BPO}

\newcommand{\meanstd}[2]{\ensuremath{#1{\scriptstyle\,\pm\,#2}}}

\graphicspath{{images/}{figures/}}
\hypersetup{
  pdftitle={P2TPO: Structure-Aligned Page-to-Token Credit Assignment for Full-Page Document Parsing},
  pdfauthor={Yong Zu, Ninghua Yang, Zezhong Tan, Xinhong Ma, Ziqiang Dong, Yang Yang}
}

\title{P2TPO: Structure-Aligned Page-to-Token Credit Assignment for Full-Page Document Parsing}
\author[1,2]{Yong Zu\textsuperscript{\ensuremath{\S\ddagger}}}
\author[1]{Ninghua Yang\textsuperscript{\ensuremath{\S}}}
\author[1]{Zezhong Tan}
\author[1]{Xinhong Ma}
\author[1]{Ziqiang Dong\textsuperscript{\ensuremath{\ast}}}
\author[1]{Yang Yang\textsuperscript{\ensuremath{\dagger}}}
\affil[1]{Qwen Business Unit, Alibaba}
\affil[2]{Xidian University}
\footnotetext{\textsuperscript{\ensuremath{\S}}Core contributors.\quad
  \textsuperscript{\ensuremath{\ast}}Project lead.\quad
  \textsuperscript{\ensuremath{\dagger}}Team lead.\quad
  \textsuperscript{\ensuremath{\ddagger}}Work done during an internship at Alibaba.}
\date{}

\AfterEndEnvironment{abstract}{\maketitle\suppressfloats[t]}
\pretocmd{\bibliography}{\FloatBarrier}{}{}
\BeforeBeginEnvironment{tabularx}{\ifvmode\noindent\fi}

\begin{document}
\begin{CJK*}{UTF8}{gbsn}
\begin{abstract}
End-to-end full-page document parsing must preserve the content and structure of heterogeneous elements, yet otherwise accurate outputs can contain local errors and omissions. A single page-level advantage does not explicitly identify which elements differ in quality across sampled outputs or where their errors occur. We introduce Page-to-Token Policy Optimization (P2TPO), a structure-aligned credit-assignment framework that operates on full-page policy trajectories. Reference-element alignment first associates block-level quality contrasts with generated spans. Unified Anchor Pricing (UAP) incorporates matched and missing element outcomes and provides feedback relative to a fixed quality ceiling when sampled outputs receive identical but imperfect scores for a reference element. Within matched blocks, Span Credit Redistribution (SCR) redistributes existing block credit according to aligned correct and erroneous spans, using boundary proxies for within-block omissions while preserving the total block-derived advantage across its tokens. Under a fixed-weight additive page utility, we show that element-indexed contrasts retain local quality differences that page-level aggregation can cancel out. In the main experimental comparison, P2TPO outperforms all evaluated policy optimization baselines in seed-averaged scores on each of the six document parsing benchmarks. Additional experiments demonstrate gains across model scales and initializations.
\end{abstract}

\section{Introduction}

End-to-end document parsers transcribe images of digital documents, scans, and photographs into a single structured sequence in reading order~\citep{blecher2023nougat,taghadouini2026lightonocr,ouyang2024omnidocbench,du2025docptbench}. We study full-page Markdown transcription, with faithful text, \LaTeX{} formulas, and HTML tables. Dense heterogeneous content, skew, warping, and uneven illumination challenge accuracy and completeness~\citep{ouyang2024omnidocbench,zhou2026real5}. Errors are often local: an otherwise accurate output may corrupt a formula, omit words, or miss an entire element.

Supervised next-token learning serves as the foundation for these parsers, while reinforcement learning (RL) is used for post-training optimization of reference-based measures for fully generated documents. These measures evaluate content accuracy, structural fidelity, and completeness. Recent parsers highlight the effectiveness of reinforcement learning in post-training through layout-aware rewards and verifiable unit tests ~\citep{wang2025infinityparser,chen2025logicsparsing,poznanski2025olmocr2}. A key question for full-page transcription is how to convert these measurements into feedback that corrects missing or corrupted content while preserving correctly transcribed sections.

With page-level outcome rewards, GRPO assigns one response advantage to all generated tokens~\citep{shao2024deepseekmath}. Yet two outputs of the same page can receive similar overall scores while making opposite local errors: one transcribes a formula correctly but omits a table, whereas the other recovers the table and corrupts the formula (Figure~\ref{fig:overview}(a)). A shared response coefficient does not preserve these element-specific contrasts. GDPO normalizes reward dimensions separately before combining them into a shared response coefficient~\citep{liu2026gdpo}; CSPO routes table-wide functional-attribute advantages to corresponding tokens~\citep{yang2026cspo}. Our focus is the remaining reference-instance question: how to compare corresponding individual elements across full-page outputs, include missing outcomes, and allocate their signed credit to generated positions.

We analyze this phenomenon at three levels. First, reference identity maintains contrasts between individual elements, including instances of the same type. Second, element-wise comparisons must account for missing outcomes and retain feedback when all outcomes are equally imperfect: an omitted element lacks a generated span, and centering tied scores diminishes its relative signal. Third, feedback must differentiate erroneous positions from correct content within a matched block, rather than assigning uniform credit across the block. Together, the four failure modes illustrated in Figure~\ref{fig:overview}(b) highlight three key requirements: preserve element identity, complete the comparison outcomes, and control where credit is assigned.

\begin{figure}[!htbp]
\centering
\includegraphics[width=\linewidth]{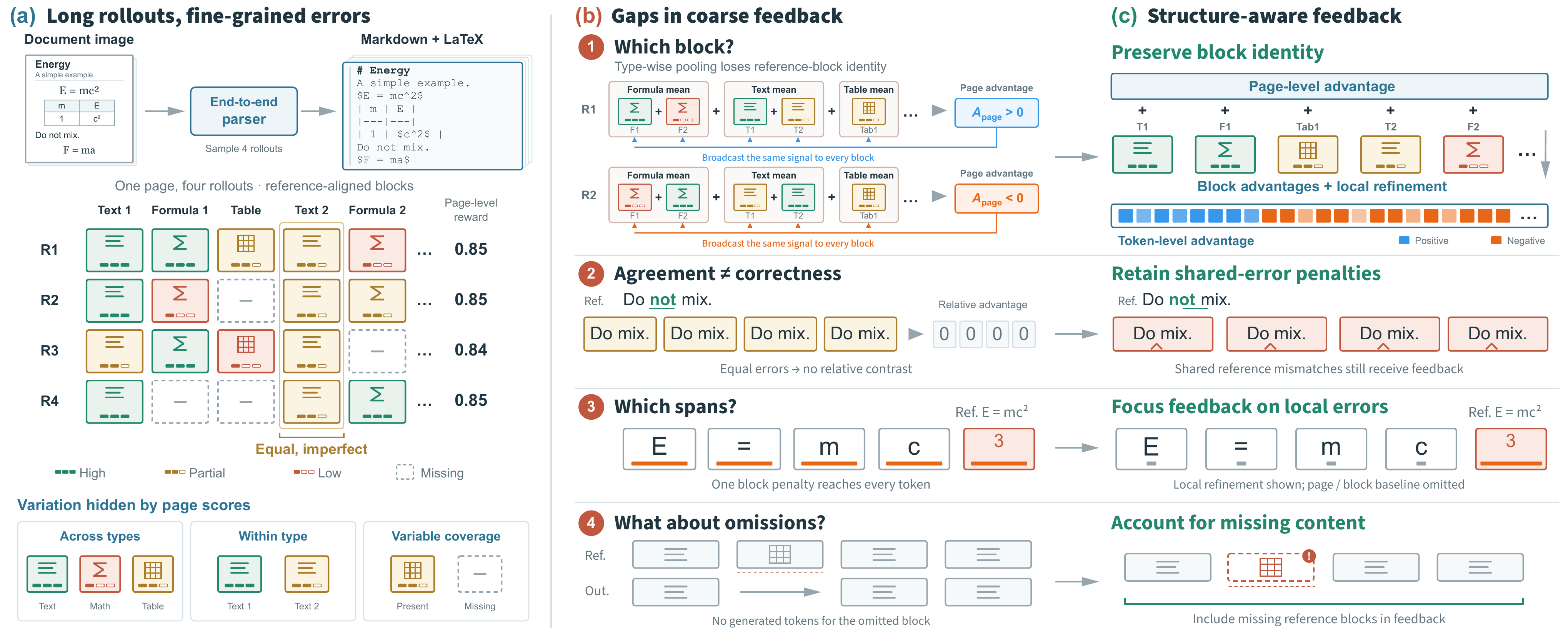}%
\caption{Schematic motivation for structure-aware feedback. (a) Similar page rewards hide local differences. (b) Coarse feedback obscures block identity, shared errors, and error/omission locations. (c) Structure-aware feedback retains element identity and assigns local credit. Rewards and serialized content are illustrative; training outputs use HTML tables.}
\label{fig:overview}
\end{figure}

We introduce \emph{Page-to-Token Policy Optimization} (P2TPO), which assigns local credit while retaining full-page generation and page-level feedback (Figure~\ref{fig:overview}(c)). It first aligns each output to reference elements so that corresponding instances can be compared across rollouts. \emph{Unified Anchor Pricing} (UAP) completes these comparisons with missing outcomes and uses a quality ceiling to retain feedback at imperfect consensus. Matched coefficients act on generated blocks; whole-element omissions act through existing boundary tokens. \emph{Span Credit Redistribution} (SCR) then reallocates an eligible matched block's coefficient toward aligned errors when negative and correct positions when positive, preserving its sign and total credit mass. The final update combines this local feedback with the shared page term.

Our contributions are fourfold:
\begin{itemize}
    \item \textbf{Reference-element credit assignment.} We retain individual element identities across full-page rollouts and characterize the local quality contrasts that aggregation can erase under a fixed-weight additive page utility.
    \item \textbf{Complete outcome feedback.} We propose the UAP, which outlines a unified set of rules for handling matches, managing missing values, and addressing uniformly imperfect outcomes. This includes a feedback mechanism for instances where an entire element is absent.
    \item \textbf{Budget-preserving intra-block redistribution.} We developed the SCR mechanism, which reallocates block credits while preserving both the sign and the total amount of matched block credits. We proved the conservation of credits and specified the conditions under which informative masks can increase the allocation to the target position.
    \item \textbf{Empirical effectiveness and generality.} Using a shared supervised checkpoint, matched update budgets, and a five-seed run, P2TPO outperforms four response-level baselines on all 13 primary splits across six benchmarks that include digital documents, scanned pages, and real-world photographs. In addition, across three different model and initialization settings, it achieves the highest mean in 37 out of 39 comparisons of splits and settings.
\end{itemize}

\section{Method}
\label{sec:method}

Given a page image, its reference document, and a set of sampled full-page outputs, P2TPO assigns a single coefficient to each generated token. Reference alignment identifies corresponding elements across outputs. UAP converts matched and missing outcomes into element coefficients; SCR redistributes eligible matched coefficients within their generated blocks, while whole-element omissions use a separate boundary route. These local terms combine with a shared page coefficient and enter the policy update. Generation and policy updates retain full-page autoregressive conditioning.

\begin{figure}[t]
\centering
\includegraphics[width=\linewidth]{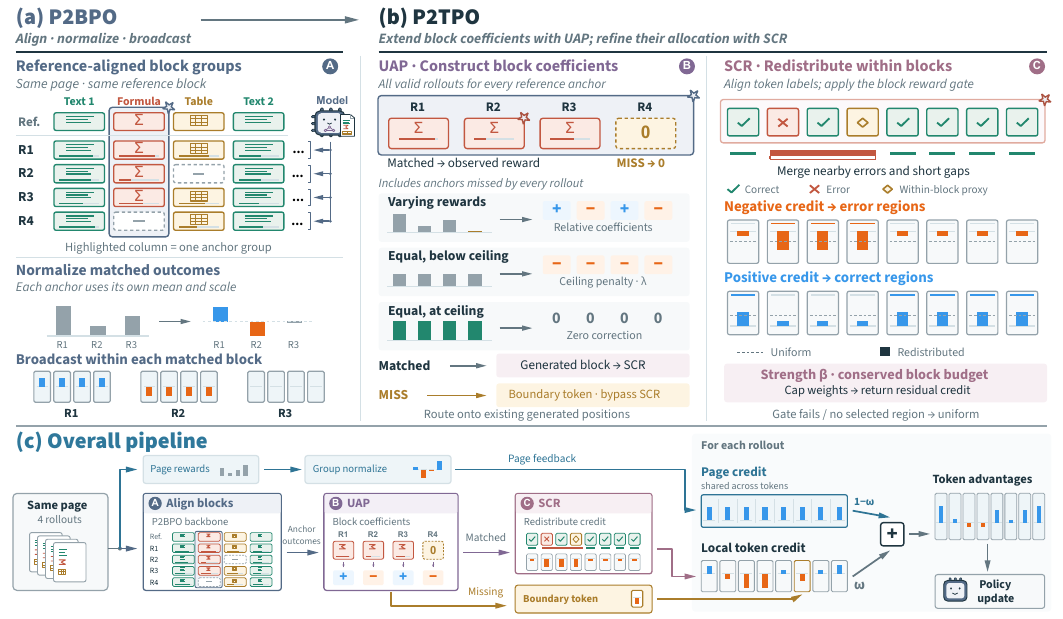}
\caption{P2TPO overview. (a) P2BPO aligns reference elements, normalizes matched outcomes, and broadcasts block credit. (b) UAP incorporates missing outcomes and imperfect-consensus feedback; SCR redistributes matched-block credit under a conserved budget. (c) The complete pipeline combines local token credit with page feedback. Missing-block coefficients reach boundary tokens directly, bypassing SCR.}
\label{fig:method-overview}
\end{figure}

\subsection{Reference-element comparisons}
\label{sec:alignment}
\label{sec:block}

Given a page image $x$ and prompt $p$, we follow GRPO's group sampling protocol~\citep{shao2024deepseekmath} and draw $G$ full-page outputs $y_i\sim\pi_{\theta_{\mathrm{old}}}(\cdot\mid x,p)$. Predictions and references are parsed into ordered text, formula, and table blocks. Content-based sequence alignment permits insertions and deletions while preserving reading order. We index each matched prediction by its corresponding reference block $y_k^\star$, rather than its position in the generated sequence. Thus, different instances of the same category remain distinct across rollouts, even when preceding elements are omitted.

Let $i$, $k$, and $t$ index rollouts, reference elements, and tokens. A matched block has a type-specific quality score $s_{i,k}\in[0,c_k]$ and a generated-token support $\mathcal B_{i,k}$ of length $L_{i,k}$. The implementation uses normalized scores with $c_k=1$ and maps generated block boundaries to disjoint token spans. For each reference $k$ with a valid match, let $\mathcal I_k$ contain its valid matched rollouts, and let $\mu_k^{\mathrm B}$ and $\sigma_k^{\mathrm B}$ be the mean and population standard deviation of their scores.

\emph{Page-to-Block Policy Optimization} (\blockmethod{}) normalizes each matched reference group independently:
\begin{equation}
A_{i,k}^{\mathrm B}=\begin{cases}
(s_{i,k}-\mu_k^{\mathrm B})/\sigma_k^{\mathrm B},&\sigma_k^{\mathrm B}>\epsilon,\\[2pt]
0,&\sigma_k^{\mathrm B}\le\epsilon,
\end{cases}
\qquad i\in\mathcal I_k.
\label{eq:p2bpo}
\end{equation}
Here $\epsilon\ge0$ is a numerical tolerance. Each coefficient is broadcast over $\mathcal B_{i,k}$ (see Fig.~\ref{fig:method-overview}(a)), so blocks within one response can receive different signs and magnitudes. Reference-wise comparisons retain contrasts that page- or category-level pooling can cancel. However, unmatched references have no coefficient, while a singleton or tied matched group produces zero credit. UAP addresses these two gaps. Appendix~\ref{app:alignment-details} summarizes the algorithm inputs.

\subsection{UAP: complete outcomes and retain quality deficits}
\label{sec:uap}

\emph{Unified Anchor Pricing} (UAP) completes each reference element's outcomes across valid rollouts $\mathcal V_k$: $Q_{i,k}=s_{i,k}$ when matched and $Q_{i,k}=0$ when missing. With mean $\mu_k$ and population standard deviation $\sigma_k$, its coefficient uses relative comparison when outcomes differ and ceiling-gap feedback when dispersion is negligible:
\begin{equation}
A_{i,k}^{\mathrm U}=\begin{cases}
(Q_{i,k}-\mu_k)/\sigma_k,&\sigma_k>\epsilon,\\[2pt]
-\lambda h(\mu_k,c_k),&\sigma_k\le\epsilon,
\end{cases}
\qquad h(u,c)=\max(0,c-u).
\label{eq:uap}
\end{equation}
Here $\lambda\ge0$ weights the ceiling gap; the same tolerance $\epsilon$ separates the two branches. For $\lambda>0$, both fully matched imperfect ties and all-missing groups receive negative coefficients, whereas a ceiling-quality tie receives zero. Thus a single element-indexed rule retains feedback for missing and uniformly imperfect outcomes.

In a mixed group, zero-valued omissions and observed block scores jointly determine the mean and dispersion, so outcome completion can change coefficients for both matched and missing outcomes.

Outcome completion includes missing scores in the reference-wise comparison. Matched coefficients act on generated blocks, while missing-element coefficients are routed to existing boundary tokens through $\nu_{i,k}(t)\ge0$, where $\sum_t\nu_{i,k}(t)=1$, bypassing SCR. Appendix~\ref{app:credit-pseudocode} details how we construct the coefficients.

\subsection{SCR: redistribute credit under a conserved budget}
\label{sec:scr}

\emph{Span Credit Redistribution} (SCR) reallocates an available block coefficient under a conserved budget, concentrating negative credit in error regions and positive credit in correct regions.

Quality gating restricts redistribution to high-scoring, type-matched blocks, focusing local allocation on residual defects within otherwise accurate elements. For eligible matched blocks, SCR creates edit-based masks based on both the prediction and the reference. Insertions and replacements identify error locations. SCR assigns deleted reference content within a matched block to a nearby generated position. Whole-element omissions follow UAP's separate boundary route. Further details appear in Appendix~\ref{app:credit-specification}. A binary mask $u_{i,t}$ selects errors for negative credit and correct positions for positive credit (see Fig.~\ref{fig:method-overview}(b)). For a nonempty mask of size $N_{i,k}=\sum_{t\in\mathcal B_{i,k}}u_{i,t}$, the redistribution has the form
\begin{equation}
d_{i,t}=1-\eta_{i,k}+\eta_{i,k}\frac{L_{i,k}u_{i,t}}{N_{i,k}},
\qquad \sum_{t\in\mathcal B_{i,k}}d_{i,t}=L_{i,k}.
\label{eq:scr}
\end{equation}
The implementation mixes uniform and selected-token allocation, applies a pre-rebate cap to concentrated weights, and returns the removed mass uniformly. This determines the effective strength $\eta_{i,k}\in[0,1]$ in Appendix~\ref{app:scr-procedure}. Nonnegative weights preserve the coefficient's sign, while their unit mean preserves both the signed and absolute block-credit budgets. Blocks ineligible for redistribution or with empty masks retain uniform weights \(d_{i,t} = 1\). Appendix~\ref{subsec:span-scr} provides details on the resulting increase in target allocation for informative masks.

\subsection{Page-to-token composition}
\label{sec:objective}

Let $\mathcal K_i$ and $\mathcal M_i$ index matched and missing reference elements, respectively. For each rollout, we clip raw page-reward dimensions to $[0,1]$ and average over the available dimensions. Normalizing these scalar rewards across rollouts for the same prompt gives $A_i^{\mathrm P}$ (Appendix~\ref{app:credit-specification}). Combining the two local contributions with page feedback gives
\begin{equation}
\begin{aligned}
C_{i,t}^{\mathrm{US}}&=
\sum_{k\in\mathcal K_i}A_{i,k}^{\mathrm U}\mathbf1[t\in\mathcal B_{i,k}]d_{i,t}
+\sum_{k\in\mathcal M_i}A_{i,k}^{\mathrm U}\nu_{i,k}(t),\\
A_{i,t}^{\mathrm{US}}&=(1-\omega)A_i^{\mathrm P}+\omega C_{i,t}^{\mathrm{US}},
\qquad \omega\in[0,1].
\end{aligned}
\label{eq:composition}
\end{equation}
The page term is shared across generated positions, while local terms vary with reference identity and within-block location. Tokens outside local supports retain the page term. The mixing weight $\omega$ balances these feedback scopes before the policy update. Alignments and the resulting $A_{i,t}^{\mathrm{US}}$ remain fixed during the clipped policy update; token probabilities retain the full autoregressive page context. Appendix~\ref{app:credit-pseudocode} provides the complete coefficient construction, component variants, and policy loss; Appendix~\ref{app:training-config} lists parameter settings.

\section{Related Work}

\paragraph{Document parsing and reinforcement post-training.}
End-to-end parsers such as Donut and Nougat generate structured documents directly from images~\citep{kim2022donut,blecher2023nougat}. Recent systems extend this setting to diverse layouts and capture conditions~\citep{cai2026navidcocr,lu2026ovisocr2}. Reinforcement post-training introduces rewards for transcription, layout, reading order, output validity, and structural fidelity~\citep{wang2025infinityparser,huang2026infinityparser2,chen2025logicsparsing,poznanski2025olmocr2,wang2026texocr,hunyuan2025ocr,zhang2026paddleocrvl16}. Table- and element-specific approaches further tailor the prediction unit and reward to local content~\citep{zhang2025monkeyocr15,li2026structab,wang2026docpo}. These systems motivate heterogeneous quality measures, while P2TPO studies how feedback for individual reference elements reaches tokens within a full-page rollout. Appendix~\ref{app:reward-interfaces} compares reward units and optimization supports.

\paragraph{Group-relative optimization and reward normalization.}
GRPO and RLOO construct response advantages from sampled groups without a learned critic~\citep{shao2024deepseekmath,ahmadian2024rloo}. GSPO uses sequence-level importance weighting and clipping, while DAPO and Dr.~GRPO revise sampling, clipping, reduction, or normalization~\citep{zheng2025gspo,yu2025dapo,liu2025drgrpo}. GDPO, RDPO, and GD$^2$PO address the normalization and interaction of multiple reward dimensions~\citep{liu2026gdpo,bai2026rdpo,liu2026gd2po}. NGRPO combines virtual-maximum calibration with asymmetric clipping to learn from homogeneous failures~\citep{nan2025ngrpo}. UAP shares ceiling-informed feedback, while unifying missing outcomes and imperfect reward consensus within reference-element comparisons and routing their feedback to generated tokens. Appendix~\ref{app:comparison-methods} details the evaluated baselines; Appendix~\ref{app:reward-interfaces} compares reward interfaces.

\paragraph{Structured and local credit assignment.}
Process supervision evaluates reasoning steps~\citep{lightman2023verify,wang2024mathshepherd}; GiGPO compares repeated environment states while retaining trajectory feedback~\citep{feng2025gigpo}. CSPO extends GRPO with table-wide functional-attribute rewards whose normalized advantages are routed to corresponding tokens alongside global feedback~\citep{yang2026cspo}. P2TPO performs local comparisons for individual reference-block instances in full-page rollouts, preserving quality differences between elements of the same type and supporting omission feedback and intra-block redistribution. Functional-attribute assessment and reference-instance comparison are complementary; Appendix~\ref{app:related-methods} discusses possible adaptations.

\section{Experiments}
\label{sec:experiments}

Unless otherwise specified, ``seeds'' refers to distinct RL training runs rather than repeated evaluations of a fixed checkpoint. The reported means and standard deviations summarize benchmark scores across these training runs.
The experiments test whether reference-element feedback improves full-page parsing, whether UAP and SCR contribute to those gains, and whether alignment-informed placement matters beyond nonuniform weighting. Additional settings assess applicability across initializations, model scales, and backbones; parameter sweeps and a 500-update study examine sensitivity to credit parameters and training horizon.

\subsection{Experimental setup}
\label{sec:setup}

\paragraph{Baselines.}
We compare P2TPO with four response-level policy optimization baselines: GRPO~\citep{shao2024deepseekmath}, GSPO~\citep{zheng2025gspo}, GDPO~\citep{liu2026gdpo}, and RDPO~\citep{bai2026rdpo}. \blockmethod{} and its +UAP and +SCR variants serve as component ablations. Appendix~\ref{app:comparison-methods} details the comparison methods.

\paragraph{Benchmarks.}
We evaluate on six benchmarks with 13 splits: OmniDocBench v1.6~\citep{ouyang2024omnidocbench,omnidoc2026release} for heterogeneous full-page parsing; PureDocBench~\citep{li2026puredocbench} for Clean, Digital, and Real conditions; FinixDocBench~\citep{wang2026finixdoc} for digital and photographed documents; Real5-OmniDocBench~\citep{zhou2026real5} for five capture conditions; DocPTBench~\citep{du2025docptbench} for photographed and dewarped pages; and Wild-OmniDocBench~\citep{li2026wildomni} for real-world captures. Scores compare methods within each benchmark: five benchmarks use the shared OmniDocBench evaluator, while FinixDocBench pools its Digital and Photo tracks using its official evaluator. Real5 and Wild reuse OmniDocBench source pages and assess capture robustness. Appendix~\ref{app:external} specifies the evaluated scope and score definitions, including the adapted DocPTBench composite.

\paragraph{Implementation details.}
We use converted Infinity-Doc2-5M annotations~\citep{huang2026infinityparser2} for document SFT followed by RL. SFT provides the initial checkpoint: Qwen3.5-2B trained on 700K pages in the primary setting, with additional settings covering 50K-page checkpoints, Qwen3.5-4B, and Qwen3-VL-2B-Instruct. Within each setting, all RL methods start from the same checkpoint and share the update budget. Init reports its measured benchmark score before RL. Table~\ref{tab:all-benchmarks} reports primary RL Overall scores as means $\pm$ standard deviations over five seeds. Appendices~\ref{app:training-data}, \ref{app:training-config}, and~\ref{app:generality-efficiency} provide further details.

\subsection{Cross-benchmark performance}
\label{sec:cross-benchmark}

Table~\ref{tab:all-benchmarks} reports Overall scores for the primary Qwen3.5-2B/700K setting across all six benchmarks and 13 evaluation splits.
\begin{table}
\centering
\caption{Cross-benchmark Overall scores ($\uparrow$, 0--100) from Qwen3.5-2B initialized by SFT on 700K pages. Init reports pre-RL scores; RL entries report mean $\pm$ standard deviation over five seeds after 100 updates.}
\label{tab:all-benchmarks}
\label{tab:results}
\scriptsize
\setlength{\tabcolsep}{0.8pt}
\renewcommand{\arraystretch}{1.08}
\resizebox{\linewidth}{!}{%
\begin{tabular}{@{}l@{\hspace{3pt}}l@{\hspace{3pt}}c@{\hspace{3pt}}*{8}{c}@{}}
\toprule
Benchmark & Split & Init & GRPO & GSPO & GDPO & RDPO & P2BPO & +UAP & +SCR & \textbf{P2TPO} \\
\midrule
\textbf{OmniDocBench} & v1.6 & 92.17 & \meanstd{92.25}{0.34} & \meanstd{92.38}{0.18} & \meanstd{92.31}{0.24} & \meanstd{92.32}{0.35} & \meanstd{92.48}{0.15} & \meanstd{\underline{92.84}}{0.30} & \meanstd{92.56}{0.21} & \meanstd{\mathbf{93.12}}{0.23} \\
\cmidrule(lr){1-11}
\multirow{3}{*}{\textbf{PureDocBench}} & Clean & 81.28 & \meanstd{83.19}{0.70} & \meanstd{83.44}{0.31} & \meanstd{82.68}{0.65} & \meanstd{83.10}{0.54} & \meanstd{\underline{84.16}}{0.62} & \meanstd{83.98}{0.37} & \meanstd{83.78}{0.40} & \meanstd{\mathbf{84.68}}{0.74} \\
 & Digital & 76.66 & \meanstd{79.89}{0.48} & \meanstd{79.50}{0.33} & \meanstd{79.24}{0.50} & \meanstd{79.26}{0.50} & \meanstd{\underline{80.31}}{0.48} & \meanstd{80.26}{0.39} & \meanstd{79.79}{0.31} & \meanstd{\mathbf{81.03}}{0.60} \\
 & Real & 63.29 & \meanstd{68.89}{0.64} & \meanstd{67.64}{0.38} & \meanstd{68.23}{0.69} & \meanstd{68.27}{0.28} & \meanstd{69.36}{0.46} & \meanstd{\underline{69.87}}{0.49} & \meanstd{69.08}{0.35} & \meanstd{\mathbf{70.33}}{0.70} \\
\cmidrule(lr){1-11}
\textbf{FinixDocBench} & --- & 65.30 & \meanstd{69.36}{0.49} & \meanstd{69.24}{0.44} & \meanstd{69.21}{1.42} & \meanstd{68.15}{0.74} & \meanstd{70.03}{0.77} & \meanstd{\underline{70.29}}{0.62} & \meanstd{69.37}{0.44} & \meanstd{\mathbf{70.92}}{0.92} \\
\cmidrule(lr){1-11}
\multirow[c]{5}{*}{\shortstack[l]{\textbf{Real5-}\\\textbf{OmniDocBench}}} & Scanning & 91.06 & \meanstd{92.07}{0.20} & \meanstd{92.00}{0.17} & \meanstd{92.00}{0.21} & \meanstd{91.90}{0.28} & \meanstd{\underline{92.44}}{0.29} & \meanstd{\mathbf{92.52}}{0.16} & \meanstd{92.25}{0.31} & \meanstd{92.40}{0.12} \\
 & Warping & 86.50 & \meanstd{90.45}{0.53} & \meanstd{89.36}{0.29} & \meanstd{90.11}{0.15} & \meanstd{89.92}{0.37} & \meanstd{90.44}{0.57} & \meanstd{\underline{90.59}}{0.30} & \meanstd{90.31}{0.46} & \meanstd{\mathbf{90.77}}{0.45} \\
 & Skew & 81.77 & \meanstd{88.26}{0.48} & \meanstd{88.47}{0.34} & \meanstd{88.09}{0.39} & \meanstd{88.47}{0.42} & \meanstd{\underline{88.82}}{0.64} & \meanstd{88.78}{0.38} & \meanstd{88.56}{0.61} & \meanstd{\mathbf{89.15}}{0.52} \\
 & Illumination & 91.23 & \meanstd{92.41}{0.12} & \meanstd{91.98}{0.26} & \meanstd{92.22}{0.20} & \meanstd{92.35}{0.09} & \meanstd{92.59}{0.20} & \meanstd{\underline{92.62}}{0.19} & \meanstd{92.41}{0.28} & \meanstd{\mathbf{92.83}}{0.17} \\
 & Screen photo & 90.32 & \meanstd{92.04}{0.17} & \meanstd{91.76}{0.31} & \meanstd{91.75}{0.30} & \meanstd{91.52}{0.28} & \meanstd{\underline{92.33}}{0.30} & \meanstd{92.28}{0.12} & \meanstd{92.20}{0.50} & \meanstd{\mathbf{92.62}}{0.29} \\
\cmidrule(lr){1-11}
\multirow{2}{*}{\textbf{DocPTBench}} & SynReal & 62.27 & \meanstd{73.21}{0.67} & \meanstd{72.85}{0.65} & \meanstd{72.59}{0.54} & \meanstd{72.80}{0.26} & \meanstd{73.61}{0.56} & \meanstd{73.60}{0.45} & \meanstd{\mathbf{73.81}}{1.14} & \meanstd{\underline{73.79}}{0.39} \\
 & Dewarp & 79.66 & \meanstd{83.57}{0.29} & \meanstd{82.85}{0.36} & \meanstd{83.12}{0.30} & \meanstd{83.29}{0.82} & \meanstd{\underline{83.95}}{0.44} & \meanstd{\mathbf{83.99}}{0.42} & \meanstd{83.36}{0.50} & \meanstd{83.76}{0.51} \\
\cmidrule(lr){1-11}
\textbf{Wild-OmniDocBench} & --- & 80.04 & \meanstd{84.67}{0.18} & \meanstd{83.48}{0.20} & \meanstd{84.28}{0.20} & \meanstd{84.50}{0.29} & \meanstd{\underline{84.72}}{0.47} & \meanstd{84.68}{0.54} & \meanstd{84.55}{0.56} & \meanstd{\mathbf{85.02}}{0.21} \\
\bottomrule
\end{tabular}%
}
\par\vspace{3pt}
{\scriptsize\raggedright
\textit{Note.} Init is the shared SFT-initialized checkpoint; no standard deviation is reported for Init. \blockmethod{} is the matched-element reference configuration; +UAP and +SCR enable one component at a time. Bold and underlined values mark the best and second-best RL means, respectively. --- denotes an aggregate benchmark score.\par}
\end{table}

On the primary OmniDocBench v1.6 benchmark, P2TPO reaches $93.12\pm0.23$, a gain of 0.95 points from its pre-RL score and 0.74 points over the strongest response-level baseline. It also exceeds the strongest response-level mean on all 12 external splits, with margins from 0.19 points on DocPTBench Dewarp to 1.56 points on FinixDocBench. The next-largest margins occur on PureDocBench Real (+1.44) and Clean (+1.24), and the gains hold across all five Real5-OmniDocBench capture conditions. 

\subsection{Component ablations and credit diagnostics}
\label{sec:complementarity}
\label{sec:credit-analysis}
\label{sec:qualitative}

\paragraph{Reference-element routing.}
Matched-element comparisons account for a substantial part of the gain: \blockmethod{} exceeds the strongest response-level baseline in mean Overall score on 12 of 13 splits, with the largest margins on PureDocBench Clean (+0.72 points) and FinixDocBench (+0.67 points; Table~\ref{tab:all-benchmarks}). All differences below are in mean Overall-score points.

\paragraph{SCR is more consistently beneficial with UAP.}
SCR benefits most consistently from UAP's completed and calibrated feedback. With UAP enabled, adding SCR improves the mean on 11 of 13 splits, led by PureDocBench Digital (+0.77) and Clean (+0.70); without UAP, it improves two of 13 splits. Adding UAP to \blockmethod{} + SCR improves 12 of 13 splits. Full P2TPO attains the highest mean among the four component configurations on ten splits (Table~\ref{tab:all-benchmarks}). These descriptive five-seed comparisons support the complementarity of feedback construction and token allocation.

\paragraph{Internal UAP interventions.}
We retain outcome completion and SCR while separately disabling consensus feedback ($\lambda=0$) or omission routing. Table~\ref{tab:mechanism-robustness}(a) shows that full P2TPO exceeds the former in mean score on eight of 13 splits and the latter on 11. Full P2TPO also has lower observed across-seed standard deviations than the $\lambda=0$ configuration on all 13 splits. These interventions assess UAP's feedback and delivery rules; Appendix~\ref{app:uap-ablations} gives their definitions and detailed comparisons.

\paragraph{SCR allocation controls.}
To isolate the contribution of token placement, we preserve each eligible block's coefficient values and selected-token count while randomizing their positions. Aligned SCR exceeds random support in mean score on 12 of 13 splits and circular shift on ten (Table~\ref{tab:mechanism-robustness}(b)). Random support and circular shift exceed uniform allocation on two and three splits, respectively. These comparisons support alignment-informed placement beyond nonuniform weighting alone; Appendix~\ref{app:scr-allocation-controls} gives the protocols and detailed results.

\begin{table}
\centering
\caption{UAP ablations and SCR allocation controls across six benchmarks and 13 splits. Entries report mean Overall score ($\uparrow$, 0--100) $\pm$ standard deviation over five runs; settings follow Table~\ref{tab:all-benchmarks}.}
\label{tab:mechanism-robustness}
\scriptsize
\setlength{\tabcolsep}{2pt}
\renewcommand{\arraystretch}{1.0}
\resizebox{\linewidth}{!}{%
\begin{tabular}{@{}l@{\hspace{3pt}}l@{\hspace{3pt}}c@{\hspace{3pt}}*{6}{c}@{}}
\toprule
\multirow{2}{*}{Benchmark} & \multirow{2}{*}{Split} & \multirow{2}{*}{\shortstack{Full\\P2TPO}} & \multicolumn{2}{c}{\textbf{(a) UAP interventions}} & \multirow{2}{*}{\shortstack{w/o\\UAP}} & \multicolumn{2}{c}{\textbf{(b) SCR allocation controls}} & \multirow{2}{*}{\shortstack{w/o\\SCR}} \\
\cmidrule(lr){4-5}\cmidrule(lr){7-8}
& & & \shortstack{w/o consensus\\feedback} & \shortstack{w/o omission\\routing} & & \shortstack{Circular\\shift} & \shortstack{Random\\support} & \\
\midrule
\textbf{OmniDocBench} & v1.6 & \meanstd{\mathbf{93.12}}{0.23} & \meanstd{92.92}{0.45} & \meanstd{92.86}{0.23} & \meanstd{92.56}{0.21} & \meanstd{92.73}{0.17} & \meanstd{92.82}{0.25} & \meanstd{92.84}{0.30} \\
\cmidrule(lr){1-9}
\multirow{3}{*}{\textbf{PureDocBench}} & Clean & \meanstd{\mathbf{84.68}}{0.74} & \meanstd{84.38}{0.94} & \meanstd{83.74}{0.35} & \meanstd{83.78}{0.40} & \meanstd{83.85}{0.79} & \meanstd{83.84}{0.16} & \meanstd{83.98}{0.37} \\
 & Digital & \meanstd{\mathbf{81.03}}{0.60} & \meanstd{80.79}{0.88} & \meanstd{80.15}{0.36} & \meanstd{79.79}{0.31} & \meanstd{80.08}{0.73} & \meanstd{80.43}{0.45} & \meanstd{80.26}{0.39} \\
 & Real & \meanstd{\mathbf{70.33}}{0.70} & \meanstd{70.05}{0.93} & \meanstd{69.67}{0.64} & \meanstd{69.08}{0.35} & \meanstd{69.69}{0.78} & \meanstd{69.33}{0.47} & \meanstd{69.87}{0.49} \\
\cmidrule(lr){1-9}
\textbf{FinixDocBench} & --- & \meanstd{70.92}{0.92} & \meanstd{70.42}{1.36} & \meanstd{70.05}{0.80} & \meanstd{69.37}{0.44} & \meanstd{\mathbf{70.98}}{1.22} & \meanstd{70.03}{1.09} & \meanstd{70.29}{0.62} \\
\cmidrule(lr){1-9}
\multirow[c]{5}{*}{\shortstack[l]{\textbf{Real5-}\\\textbf{OmniDocBench}}} & Scanning & \meanstd{92.40}{0.12} & \meanstd{92.48}{0.35} & \meanstd{92.45}{0.19} & \meanstd{92.25}{0.31} & \meanstd{92.26}{0.41} & \meanstd{92.36}{0.23} & \meanstd{\mathbf{92.52}}{0.16} \\
 & Warping & \meanstd{\mathbf{90.77}}{0.45} & \meanstd{90.68}{0.77} & \meanstd{90.45}{0.39} & \meanstd{90.31}{0.46} & \meanstd{90.41}{0.35} & \meanstd{90.25}{0.48} & \meanstd{90.59}{0.30} \\
 & Skew & \meanstd{\mathbf{89.15}}{0.52} & \meanstd{88.92}{0.81} & \meanstd{88.80}{0.31} & \meanstd{88.56}{0.61} & \meanstd{88.70}{0.46} & \meanstd{88.42}{0.28} & \meanstd{88.78}{0.38} \\
 & Illumination & \meanstd{92.83}{0.17} & \meanstd{\mathbf{92.84}}{0.38} & \meanstd{92.59}{0.16} & \meanstd{92.41}{0.28} & \meanstd{92.52}{0.45} & \meanstd{92.45}{0.16} & \meanstd{92.62}{0.19} \\
 & Screen photo & \meanstd{92.62}{0.29} & \meanstd{\mathbf{92.67}}{0.64} & \meanstd{92.45}{0.14} & \meanstd{92.20}{0.50} & \meanstd{92.19}{0.27} & \meanstd{92.52}{0.56} & \meanstd{92.28}{0.12} \\
\cmidrule(lr){1-9}
\multirow{2}{*}{\textbf{DocPTBench}} & SynReal & \meanstd{73.79}{0.39} & \meanstd{\mathbf{74.49}}{1.23} & \meanstd{74.10}{0.55} & \meanstd{73.81}{1.14} & \meanstd{74.21}{1.17} & \meanstd{73.33}{0.60} & \meanstd{73.60}{0.45} \\
 & Dewarp & \meanstd{83.76}{0.51} & \meanstd{83.92}{0.61} & \meanstd{83.69}{0.13} & \meanstd{83.36}{0.50} & \meanstd{83.94}{1.09} & \meanstd{83.83}{0.48} & \meanstd{\mathbf{83.99}}{0.42} \\
\cmidrule(lr){1-9}
\textbf{Wild-OmniDocBench} & --- & \meanstd{\mathbf{85.02}}{0.21} & \meanstd{84.79}{0.57} & \meanstd{84.81}{0.75} & \meanstd{84.55}{0.56} & \meanstd{84.74}{0.37} & \meanstd{84.28}{0.38} & \meanstd{84.68}{0.54} \\
\bottomrule
\end{tabular}%
}
\par\vspace{3pt}
{\scriptsize\raggedright
\textit{Note.} Bold marks the highest mean per split. w/o UAP: \blockmethod{} + SCR; w/o SCR: \blockmethod{} + UAP. --- denotes an aggregate benchmark score.\par}
\end{table}

\begin{table}[h]
\centering
\caption{Component diagnostics on OmniDocBench v1.6 after 100 RL updates from the shared 700K-page SFT initialization. Parsing metrics report five-seed means $\pm$ standard deviations; best and second-best values are bold and underlined, respectively.}
\label{tab:component-diagnostics}
\resizebox{\linewidth}{!}{%
\begin{tabular}{@{}lccccccc@{}}
\toprule
& \multicolumn{5}{c}{Parsing metrics} & \multicolumn{2}{c}{Omission rates (\%)} \\
\cmidrule(lr){2-6}\cmidrule(l){7-8}
Method & Text ED$\downarrow$ & Formula CDM$\uparrow$ & Formula ED$\downarrow$ & Table TEDS$\uparrow$ & Order ED$\downarrow$ & Element$\downarrow$ & Local$\downarrow$ \\
\midrule
\blockmethod{} & \meanstd{0.0360}{0.0018} & \meanstd{0.9340}{0.0028} & \meanstd{0.1279}{0.0020} & \meanstd{0.8765}{0.0046} & \meanstd{0.0475}{0.0025} & $1.09$ & $6.72$ \\
\blockmethod{} + UAP & \meanstd{\mathbf{0.0333}}{0.0017} & \meanstd{\underline{0.9398}}{0.0042} & \meanstd{\underline{0.1261}}{0.0041} & \meanstd{0.8786}{0.0059} & \meanstd{\underline{0.0459}}{0.0020} & $\underline{1.01}$ & $\underline{5.59}$ \\
\blockmethod{} + SCR & \meanstd{0.0364}{0.0008} & \meanstd{0.9325}{0.0046} & \meanstd{0.1275}{0.0031} & \meanstd{\underline{0.8807}}{0.0080} & \meanstd{0.0500}{0.0017} & $1.08$ & $5.99$ \\
\textbf{P2TPO} & \meanstd{\underline{0.0343}}{0.0015} & \meanstd{\mathbf{0.9446}}{0.0027} & \meanstd{\mathbf{0.1212}}{0.0023} & \meanstd{\mathbf{0.8833}}{0.0061} & \meanstd{\mathbf{0.0423}}{0.0024} & $\mathbf{0.95}$ & $\mathbf{5.53}$ \\
\bottomrule
\end{tabular}%
}
\par\vspace{3pt}
{\scriptsize\raggedright
\textit{Note.} ED: edit distance. Parsing metrics use $[0,1]$; omission rates use $[0,100]$. Element omission counts missing eligible reference elements; local omission counts omitted characters within matched text blocks and excludes whole-element omissions. Omission-rate standard deviations are unavailable. Appendix~\ref{app:diagnostics} gives the complete definitions.\par}
\end{table}

\paragraph{Parsing accuracy and omissions.}
On OmniDocBench, adding UAP to \blockmethod{} reduces the observed element-omission rate from 1.09\% to 1.01\% and local omission from 6.72\% to 5.59\% (Table~\ref{tab:component-diagnostics}). Full P2TPO records the lowest observed omission rates and the best formula, table, and reading-order metrics among the four component configurations; \blockmethod{} + UAP gives the lowest Text ED. SCR's Formula CDM and reading-order gains occur with UAP enabled, matching the conditional pattern in the Overall-score ablation. Appendix~\ref{app:diagnostics} specifies the omission denominators and reporting scope.

Figure~\ref{fig:credit-sign-reversal} illustrates the routing resolution of reference-element feedback using two frozen eight-response groups (no policy update; Appendix~\ref{app:qualitative-records}). On the common sampled responses, \blockmethod{} produces block-dependent magnitudes and mixed-sign coefficients, while GRPO uses a uniform coefficient within each response.

\begin{figure}
\centering
\begin{minipage}[t]{0.30\linewidth}
\vspace{0pt}
\centering
\includegraphics[width=\linewidth]{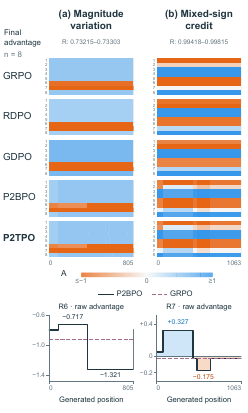}
\caption{Frozen-response credit assignment ($n=8$ per case; no policy update). Heatmaps saturate beyond $\pm1$. Lower profiles compare GRPO and \blockmethod{} for (a) R6 and (b) R7; y-scales differ.}
\label{fig:credit-sign-reversal}
\end{minipage}\hfill
\begin{minipage}[t]{0.68\linewidth}
\vspace{0pt}
\centering
\includegraphics[width=\linewidth]{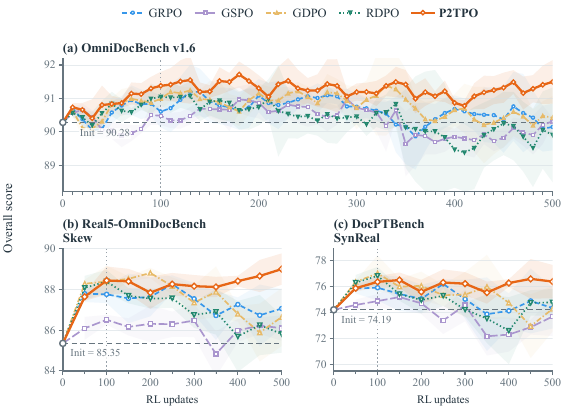}
\caption{Extended Qwen3.5-2B/50K training with 64,000 RL records. Lines show unsmoothed five-run means; bands denote $\pm1$ SD. Markers and ticks indicate evaluations every ten updates in (a) and every 50 in (b,c). Each panel starts at its measured Init (dashed line), with SD bands beginning at the first evaluation. Panel-specific y-axis ranges truncate some SD bands; vertical guides mark update 100.}
\label{fig:extended-training}
\end{minipage}
\end{figure}

\subsection{Generality across initialization, scale, and backbone}
\label{sec:generality}

Table~\ref{tab:initialization-backbone} complements the primary Qwen3.5-2B/700K results with three additional SFT initialization settings: Qwen3.5-2B/50K, Qwen3.5-4B/50K, and Qwen3-VL-2B-Instruct/700K. Within each setting, all RL methods share the starting checkpoint and 100-update budget, with five runs using distinct RL data seeds. The table reports means on the same six benchmarks and 13 splits as Table~\ref{tab:all-benchmarks}; Appendix~\ref{app:generality-efficiency} provides standard deviations and model-setting details.

\begin{table}[h]
\centering
\caption{Cross-benchmark RL mean scores ($\uparrow$) across three SFT-initialized checkpoints. Benchmark names and splits follow Table~\ref{tab:all-benchmarks}.}
\label{tab:initialization-backbone}
\scriptsize
\setlength{\tabcolsep}{0.85pt}
\renewcommand{\arraystretch}{1.15}
\resizebox{\linewidth}{!}{%
\begin{tabular}{@{}l@{\hspace{3pt}}l@{\hspace{3pt}}*{5}{c}@{\hspace{4pt}}*{5}{c}@{\hspace{4pt}}*{5}{c}@{}}
\toprule
\multicolumn{2}{@{}l}{\textbf{Initial Conditions}} & \multicolumn{5}{c}{\shortstack{\textbf{Qwen3.5-2B}\\SFT data: 50K pages}} & \multicolumn{5}{c}{\shortstack{\textbf{Qwen3.5-4B}\\SFT data: 50K pages}} & \multicolumn{5}{c@{}}{\shortstack{\textbf{Qwen3-VL-2B-Instruct}\\SFT data: 700K pages}} \\
\cmidrule(lr){3-7}\cmidrule(lr){8-12}\cmidrule(lr){13-17}
Benchmark & Split & GRPO & GSPO & GDPO & RDPO & \textbf{P2TPO} & GRPO & GSPO & GDPO & RDPO & \textbf{P2TPO} & GRPO & GSPO & GDPO & RDPO & \textbf{P2TPO} \\
\midrule
OmniDocBench & v1.6 & $91.00$ & $90.39$ & $\underline{91.05}$ & $90.85$ & $\mathbf{91.62}$ & $\underline{92.13}$ & $91.99$ & $92.07$ & $91.54$ & $\mathbf{92.53}$ & $90.45$ & $90.55$ & $\underline{90.57}$ & $88.25$ & $\mathbf{91.29}$ \\
\addlinespace[2pt]
\multirow{3}{*}{PureDocBench} & Clean & $80.83$ & $81.01$ & $\underline{81.02}$ & $79.75$ & $\mathbf{81.98}$ & $\underline{84.41}$ & $84.28$ & $84.07$ & $82.90$ & $\mathbf{84.95}$ & $\underline{81.81}$ & $81.80$ & $81.75$ & $79.67$ & $\mathbf{81.95}$ \\
 & Digital & $\underline{78.26}$ & $78.11$ & $77.85$ & $77.27$ & $\mathbf{79.16}$ & $\underline{81.80}$ & $81.06$ & $81.47$ & $80.14$ & $\mathbf{82.34}$ & $76.08$ & $76.29$ & $\mathbf{76.57}$ & $73.98$ & $\underline{76.51}$ \\
 & Real & $\underline{69.82}$ & $68.66$ & $69.58$ & $69.20$ & $\mathbf{71.01}$ & $73.55$ & $72.52$ & $\underline{73.61}$ & $71.86$ & $\mathbf{74.48}$ & $65.45$ & $65.44$ & $\underline{65.76}$ & $63.53$ & $\mathbf{66.25}$ \\
\addlinespace[2pt]
FinixDocBench & --- & $69.45$ & $70.11$ & $\underline{70.95}$ & $\underline{70.95}$ & $\mathbf{73.01}$ & $74.33$ & $\underline{74.43}$ & $73.59$ & $69.38$ & $\mathbf{76.74}$ & $67.68$ & $67.75$ & $\underline{67.95}$ & $65.18$ & $\mathbf{68.23}$ \\
\addlinespace[2pt]
\multirow{5}{*}{Real5-OmniDocBench} & Scanning & $90.66$ & $90.23$ & $\underline{90.97}$ & $\underline{90.97}$ & $\mathbf{91.29}$ & $\underline{92.04}$ & $91.89$ & $91.81$ & $91.52$ & $\mathbf{92.23}$ & $\underline{90.07}$ & $89.42$ & $89.89$ & $87.79$ & $\mathbf{90.46}$ \\
 & Warping & $\underline{89.56}$ & $89.06$ & $89.47$ & $89.53$ & $\mathbf{89.86}$ & $91.02$ & $90.84$ & $\underline{91.18}$ & $90.22$ & $\mathbf{91.45}$ & $85.19$ & $84.17$ & $\underline{85.62}$ & $84.46$ & $\mathbf{86.37}$ \\
 & Skew & $88.02$ & $87.04$ & $\underline{88.11}$ & $87.61$ & $\mathbf{88.44}$ & $\underline{90.95}$ & $90.40$ & $90.78$ & $90.02$ & $\mathbf{91.33}$ & $76.83$ & $77.74$ & $\underline{77.82}$ & $76.66$ & $\mathbf{79.79}$ \\
 & Illumination & $90.81$ & $90.35$ & $\underline{91.10}$ & $90.83$ & $\mathbf{91.23}$ & $92.48$ & $92.33$ & $\underline{92.59}$ & $91.71$ & $\mathbf{92.80}$ & $89.26$ & $88.79$ & $\underline{89.38}$ & $87.15$ & $\mathbf{89.65}$ \\
 & Screen photo & $89.79$ & $89.36$ & $\underline{90.43}$ & $90.24$ & $\mathbf{90.50}$ & $\underline{91.91}$ & $91.71$ & $91.87$ & $91.08$ & $\mathbf{92.16}$ & $\underline{88.46}$ & $88.09$ & $88.39$ & $86.01$ & $\mathbf{89.33}$ \\
\addlinespace[2pt]
\multirow{2}{*}{DocPTBench} & SynReal & $75.57$ & $75.21$ & $\mathbf{76.47}$ & $76.23$ & $\underline{76.31}$ & $\underline{77.86}$ & $76.51$ & $77.76$ & $76.37$ & $\mathbf{78.19}$ & $\underline{67.80}$ & $67.18$ & $67.23$ & $65.91$ & $\mathbf{68.94}$ \\
 & Dewarp & $82.43$ & $82.47$ & $\underline{83.26}$ & $82.90$ & $\mathbf{83.63}$ & $84.02$ & $83.98$ & $\underline{84.11}$ & $83.23$ & $\mathbf{84.88}$ & $\underline{78.28}$ & $77.38$ & $77.97$ & $76.16$ & $\mathbf{79.34}$ \\
\addlinespace[2pt]
Wild-OmniDocBench & --- & $83.67$ & $82.99$ & $83.62$ & $\underline{84.27}$ & $\mathbf{84.48}$ & $\underline{86.04}$ & $85.34$ & $85.83$ & $85.10$ & $\mathbf{86.28}$ & $79.20$ & $78.58$ & $\underline{79.65}$ & $77.78$ & $\mathbf{80.32}$ \\
\bottomrule
\end{tabular}
}
\par\vspace{3pt}
\textit{Note.} {\scriptsize\raggedright Best and second-best RL means within each initial condition and split are bold and underlined, respectively. All scores are measured after 100 RL updates. Pre-RL scores (Init) and full mean $\pm$ standard deviation results appear in Table~\ref{tab:initialization-backbone-full}. --- denotes an aggregate benchmark score.\par}
\end{table}

In the three additional model and initialization settings evaluated, P2TPO achieves the highest mean in 37 out of 39 split-setting comparisons. This includes all 13 comparisons with Qwen3.5-4B/50K, and P2TPO also outperforms GSPO in all 39 cases. These results demonstrate its effectiveness across the tested initial policies, model scales, and backbones. Full details, including means, standard deviations, and split-specific comparisons, can be found in the Appendix~\ref{app:generality-efficiency}.

\subsection{Hyperparameter sensitivity and extended training}
\label{sec:robustness}

\paragraph{Hyperparameter sensitivity.}
Across one-at-a-time sweeps, all configurations stay within 0.95 points of the default on each split and exceed the strongest response-level baseline on at least 11 of 13 splits, indicating robustness within the tested ranges (Table~\ref{tab:parameter-sensitivity}, Appendix~\ref{app:parameter-sensitivity}).

\paragraph{Extended training horizon.}
The 500-update study uses Qwen3.5-2B/50K, 64,000 RL records, and five runs per method, with OmniDocBench v1.6 evaluated every ten updates and the other 12 splits every 50. At update 500, P2TPO attains the highest mean on all 13 splits. On Real5-OmniDocBench Skew and DocPTBench SynReal, its final margins over the strongest response-level baseline are 1.94 and 1.60 points, respectively (Figure~\ref{fig:extended-training}). These panels illustrate improvement and retention relative to this series' update-100 checkpoints. Appendix~\ref{app:extended-training} provides the separate-run protocol, trajectories, and all additional checkpoints.

\section{Theoretical Analysis}
\label{sec:structured-feedback-theory}

\paragraph{What is lost in aggregation?}
\blockmethod{} compares individual reference blocks; element-group feedback pools multiple blocks, such as those of the same type, and page feedback pools the groups. For common matched rollouts, let $\mathbf Q$ contain scores of $K$ blocks, $\mathbf H$ center across rollouts, and $\mathbf B=\mathbf H\mathbf Q$. With fixed positive normalized weights, $\mathbf M$ pools blocks into $C$ nonempty groups and $\boldsymbol\alpha$ combines their scores into an additive page utility:
\begin{equation}
\underbrace{\mathbf B}_{\text{reference blocks}}
\;\longmapsto\;
\underbrace{\mathbf B\mathbf M}_{\text{element groups}}
\;\longmapsto\;
\underbrace{\mathbf B\mathbf M\boldsymbol\alpha}_{\text{page}}.
\label{eq:main-contrast-hierarchy}
\end{equation}
When $1<C<K$, both maps are non-invertible: grouping can cancel opposing block contrasts, while page aggregation can additionally cancel contrasts between groups. Retaining reference-block identity preserves distinctions that these aggregations can erase before normalization. Appendix~\ref{subsec:block-contrasts} proves this result (Proposition~\ref{prop:block-information}) and quantifies the coefficient-reconstruction error imposed by coarser routing (Lemma~\ref{lem:routing-resolution}).

Appendix~\ref{subsec:uap-theory} establishes how UAP activates and routes feedback for omitted or uniformly imperfect outcomes. Appendix~\ref{subsec:span-scr} proves SCR's source-block budget conservation and the conditions on redistribution strength and mask precision under which more credit reaches the intended targets.

\section{Discussion and Limitations}
\label{sec:limitations}

P2TPO constructs element-level feedback from ground-truth references, so its alignment and quality scores depend on annotation accuracy and serialization conventions. Annotation noise, repeated text, and alternative valid serializations can introduce ambiguity into element matching and token labels. Reliable references and alignment procedures are therefore important conditions for applying the framework.
Our experiments focus on text, formulas, and tables; code and figures are not evaluated as distinct element types. Extending the framework to these elements requires suitable representations, alignment procedures, and quality measures. Future work could investigate their joint optimization within an end-to-end reinforcement learning framework.
Finally, SCR currently allocates credit between aligned correct and erroneous token regions, with a shared redistribution weight within each region. Quality gating and cap-and-rebate redistribution bound concentration while preserving the source block's credit budget. Distinguishing individual token contributions within these regions is a direction for finer localization, with its effects on optimization to be evaluated.

\begin{samepage}
\section{Conclusion}

We introduced P2TPO for structure-aligned RL post-training of full-page document parsers. It preserves reference-element identity, completes feedback for missing outcomes and imperfect consensus through UAP, and localizes matched-block credit through budget-preserving SCR. Our analysis characterizes local quality contrasts that fixed additive aggregation can erase. From a shared supervised checkpoint, P2TPO exceeds four response-level baselines in five-seed mean score on all 13 primary splits across six benchmarks, including a 0.74-point gain over the strongest baseline on OmniDocBench v1.6. Additional settings and ablations support its applicability and the complementarity of outcome feedback and alignment-informed token allocation.
\par
\end{samepage}

\section*{AI Use Statement}
During the preparation of this manuscript, we utilized generative artificial intelligence tools to assist with literature search and discovery, the drafting of certain sections, linguistic refinement and optimization of expression, the review of some figures and tables, and the generation of illustrative material, as well as the construction of proofs for mathematical propositions. We did not use generative artificial intelligence to formulate research questions, develop core research ideas or design research methods. The authors reviewed and verified all content generated or revised with the assistance of artificial intelligence, including confirming that the revised wording retains the original meaning, cross-checking cited references against their sources, and verifying the validity of mathematical arguments and the assumptions on which they rely. The authors assume full responsibility for the final content of this study, including the text, references, mathematical arguments, illustrative material, and the reported results and conclusions.
\bibliography{iclr2027_conference}
\bibliographystyle{iclr2027_conference}

\clearpage
\appendix
\section*{Appendix overview}

The supplementary material presents reward-interface comparisons, method details and algorithms, experimental protocols and results, theoretical analysis, and qualitative case studies, in that order. Section titles and page numbers below link directly to their destinations.

\begingroup
\newcommand{\appsectionentry}[1]{\addlinespace[5pt]\multicolumn{2}{@{}l}{\textbf{\hyperref[#1]{\ref*{#1}\quad\nameref*{#1}}}} & \hyperref[#1]{\pageref*{#1}}\\}
\newcommand{\appsubentry}[1]{\hspace{0.7em}\hyperref[#1]{\ref*{#1}} & \hyperref[#1]{\nameref*{#1}} & \hyperref[#1]{\pageref*{#1}}\\}
\begin{tabularx}{\linewidth}{@{}lXr@{}}
\toprule
\multicolumn{2}{@{}l}{\textbf{Contents}} & \textbf{Page}\\
\midrule
\appsectionentry{app:supplementary-related-work}
\appsubentry{app:reward-interfaces}
\appsubentry{app:related-methods}
\appsubentry{app:parsing-recipe-scope}
\appsectionentry{app:method-details}
\appsubentry{app:credit-specification}
\appsubentry{app:credit-pseudocode}
\appsectionentry{app:experimental-details}
\appsubentry{app:experimental-protocols}
\appsubentry{app:mechanism-study}
\appsubentry{app:qualitative-records}
\appsubentry{app:generality-efficiency}
\appsubentry{app:parameter-sensitivity}
\appsubentry{app:extended-training}
\appsubentry{app:runtime}
\appsectionentry{app:theory-proofs}
\appsubentry{subsec:block-contrasts}
\appsubentry{subsec:uap-theory}
\appsubentry{subsec:span-scr}
\appsectionentry{app:case-studies}
\bottomrule
\end{tabularx}
\endgroup

\clearpage
\section{Supplementary Related Work and Reward Interfaces}
\label{app:supplementary-related-work}

This section compares reward targets and optimizer signals, discusses CSPO-inspired table scoring as a possible extension of reference-instance feedback, and clarifies the scope of the reported parsing recipes. The evaluated baselines and component interventions are specified in Appendix~\ref{app:comparison-methods}; the proposed scoring extension is not part of the reported experiments.

\subsection{Reward targets and optimizer signals}
\label{app:reward-interfaces}

Table~\ref{tab:reward-routing} compares the prediction targets, reward construction, and optimizer signals of the reported document-parsing RL recipes. Each target label denotes the task used by that recipe, and each signal label denotes how its rewards reach generated tokens. GRPO-based optimization can accommodate different reward interfaces; the table classifies the interfaces used in the cited recipes.

\begin{table}[h]
\centering
\caption{RL targets and optimizer signals in document parsing. The signal column distinguishes response aggregation, functional-attribute routing, and reference-block credit assignment. Target and signal labels are defined below.}
\label{tab:reward-routing}
\scriptsize
\setlength{\tabcolsep}{3pt}
\renewcommand{\arraystretch}{1.12}
\begin{tabularx}{\linewidth}{@{}>{\raggedright\arraybackslash}p{0.19\linewidth}>{\raggedright\arraybackslash}p{0.12\linewidth}>{\raggedright\arraybackslash}p{0.10\linewidth}>{\raggedright\arraybackslash}X>{\raggedright\arraybackslash}p{0.20\linewidth}@{}}
\toprule
Work & Target & RL algorithm & Reward construction & Optimizer signal \\
\midrule
Infinity-Parser~\citep{wang2025infinityparser} & end2end & GRPO & Edit similarity $+$ paragraph count $+$ reading order & Sum $\to$ Scalar \\
Logics-Parsing~\citep{chen2025logicsparsing} & end2end & GRPO & Text similarity $+$ box localization $+$ reading order & Sum $\to$ Scalar \\
olmOCR~2~\citep{poznanski2025olmocr2} & end2end & GRPO & Unit-test pass rate $+$ termination and metadata checks & Sum $\to$ Scalar \\
HunyuanOCR~\citep{hunyuan2025ocr} & end2end & GRPO & Parsing: NED-based similarity; format/length validity gate & Scalar \\
LightOnOCR\newline\citep{taghadouini2026lightonocr} & end2end & GRPO & OCR: unit-test pass rate with repetition, termination, math, and format checks & Scalar \\
TexOCR~\citep{wang2026texocr} & end2end & GRPO-style & Pass rate of binary tests derived from nine metrics & Scalar \\
Infinity-Parser2~\citep{huang2026infinityparser2} & end2end; table; formula & GRPO & Task-selected EDS/TEDS/CDM; doc2json sums spatial mIoU and text EDS & Scalar;\newline Sum $\to$ Scalar (doc2json) \\
\midrule
MonkeyOCR v1.5~\citep{zhang2025monkeyocr15} & table & GRPO & Rule check and learned render-and-compare VLM reward & Scalar \\
StrucTab~\citep{li2026structab} & table & GRPO & Weighted validity $+$ prefix structure $+$ destylized content & Sum $\to$ Scalar \\
CSPO~\citep{yang2026cspo} & table & GRPO-based & Table-wide functional-attribute rewards; global TEDS $+$ compilation & Attribute $\to$ Token\newline (+ global) \\
DocPO~\citep{wang2026docpo} & text; formula; table & GRPO-style & Type-specific patch rewards with training-step sharpening & Scalar \\
PaddleOCR-VL-1.6~\citep{zhang2026paddleocrvl16} & element & GRPO & Task-specific $\mathrm{Valid}\!\times\!\mathrm{Struct}\!\times\!\mathrm{Similarity}$ & Product $\to$ Scalar \\
\midrule
\textbf{P2TPO (ours)} & end2end & P2TPO & Type-specific scores for individual matched/missing reference blocks; global page reward & Element type $\to$\newline Ref. block $\to$ Token\newline (UAP/SCR; + global) \\
\bottomrule
\end{tabularx}
\par\vspace{3pt}
{\scriptsize\raggedright
\textit{Target.} \textbf{end2end}: complete heterogeneous page transcription; \textbf{table}, \textbf{formula}, \textbf{text}: recognition of the specified content type. \textbf{element} is reserved for a shared, type-prompted region recognizer within a staged page parser; each call transcribes only that region. It does not denote P2TPO's internal reference elements. A single-type RL recipe retains its specific target label, e.g., MonkeyOCR's table recipe.\par
\smallskip
\textit{Optimizer signal.} \textbf{Scalar}: one normalized response advantage broadcast over its tokens; 
a test pass rate still belongs here. \textbf{Sum/Product $\to$ Scalar}: aggregate reward components before normalization. \textbf{Attribute $\to$ Token}: CSPO normalizes one reward per functional attribute class across rollouts and routes its advantage to that class's code tokens. \textbf{Element type $\to$ Ref. block $\to$ Token}: type selects the scoring rule; each reference-block identity defines its own comparison group; UAP completes missing outcomes and routes their credit; SCR redistributes matched-block credit within its span. Both token-routing methods retain global feedback. Infinity-Parser2 selects a task-specific reward function.\par}
\end{table}

Here ANLS is average normalized Levenshtein similarity, RoI is region of interest, IoU is intersection over union, mIoU is mean IoU, NED is normalized edit distance, EDS is edit-distance similarity, TEDS is tree-edit-distance similarity, and CDM is character detection matching; VLM denotes a vision-language model.

\subsection{Complementary mechanisms and possible adaptations}
\label{app:related-methods}
\paragraph{Attribute-informed block scoring.}
A table-block scorer could use CSPO-inspired functional attributes~\citep{yang2026cspo} to assess content and structure, then supply the resulting block scores to P2TPO's reference-instance comparisons. This would refine evaluation within aligned table blocks while retaining UAP and SCR.

\subsection{Scope of reported parsing recipes}
\label{app:parsing-recipe-scope}

\paragraph{Recipe scope and auxiliary tasks.}
HunyuanOCR's row describes its document-parsing RL recipe, not every task supported by the model. LightOnOCR's row covers its OCR recipe; its separate bounding-box recipe uses IoU-based localization rewards and is not element transcription. PaddleOCR-VL-1.6 uses layout detection followed by task-prompted region recognition; its RL tasks include text, formulas, and tables as well as charts, seals, and spotting. DocPO's row covers mixed text/formula/table patch recognition. Its step-aware sharpening follows the training step and still yields one response coefficient.

\paragraph{Document question answering.}
DocR1~\citep{xiong2025docr1} and DocThinker~\citep{yu2025docthinker} optimize answers and reasoning rather than page transcriptions, so they are outside the table's parsing targets. DocR1's EviGRPO sums format, answer ANLS, and evidence-page F1 rewards. DocThinker combines format, answer accuracy, RoI IoU, and question-rephrasing rewards before GRPO. Both use a Sum $\to$ Scalar interface.

\section{Credit Construction and Policy Update Algorithms}
\label{app:method-details}

This appendix specifies the procedures underlying Section~\ref{sec:method}: coefficient construction (Algorithm~\ref{alg:p2tpo-credit}), SCR redistribution (Algorithm~\ref{alg:p2tpo-scr}), and the policy update (Algorithm~\ref{alg:p2tpo-update}). Appendix~\ref{app:training-config} provides the training settings.

\subsection{Algorithmic inputs and credit flow}
\label{app:reward-construction}
\label{app:alignment-details}
\label{app:credit-specification}

For each page's rollout group, the evaluation interface supplies raw page-reward dimensions $z_{i,d}$ and availability masks $v_{i,d}$, reference-indexed scores $s_{i,k}\in[0,1]$, matched and missing index sets $\mathcal K_i$ and $\mathcal M_i$, and generated-token supports $\mathcal B_{i,k}$. SCR additionally receives disjoint correct- and error-span masks for eligible blocks. Missing-element contributions use the normalized routing weights $\nu_{i,k}(t)$ defined in Section~\ref{sec:objective}.

\paragraph{Page-reward construction.}
Let $v_{i,d}\in\{0,1\}$ indicate whether raw reward dimension $z_{i,d}$ is available for rollout $i$. For rollouts with at least one available dimension, define
\[
r_i=\frac{\sum_d v_{i,d}\operatorname{clip}(z_{i,d},0,1)}{\sum_d v_{i,d}}.
\]
Let $\mathcal G$ contain the valid, nonempty rollouts for the same prompt with at least one available reward dimension. For $|\mathcal G|\ge2$, let $\mu_{\mathcal G}$ and $\sigma_{\mathcal G}$ be the mean and population standard deviation of $(r_i)_{i\in\mathcal G}$. We set $A_i^{\mathrm P}=(r_i-\mu_{\mathcal G})/\sigma_{\mathcal G}$ when $\sigma_{\mathcal G}>10^{-8}$ and $A_i^{\mathrm P}=r_i-\mu_{\mathcal G}$ otherwise. Excluded rollouts and groups with fewer than two eligible rollouts receive zero page advantage.

\paragraph{SCR mask construction.}
For type-matched text, formula, and table blocks, we compare predicted token pieces with pieces from the aligned reference using \texttt{SequenceMatcher} with \texttt{autojunk=False}. Comparison keys strip leading and trailing whitespace from each piece. Equal pieces receive label $+1$, and inserted or replaced predicted pieces receive $-1$. A reference deletion assigns $-1$ to the next predicted token, or the final token at the block boundary, overriding any earlier match label. For nonempty reference-piece sequences, whitespace-only pieces and standalone display-math delimiter pieces are then assigned $0$.

Error labels seed the error mask. We merge error runs separated by at most $g$ non-error tokens, including the intervening tokens in the merged mask. Positive labels outside this mask form the correct mask. SCR selects the error mask for a negative block coefficient and the correct mask for a positive coefficient. Redistribution requires the raw block reward to be at least its type-specific quality threshold. Algorithm~\ref{alg:p2tpo-scr} converts the selected mask into nonnegative weights with unit mean over the block.

\paragraph{Token mapping.}
Predicted pieces use response-text token offsets; reference pieces use tokenizer offsets without special tokens, with per-token decoding as a fallback. Reference formula delimiters are removed before piece construction. Packed token endpoints determine each block's span.

\subsection{Credit construction and policy update}
\label{app:credit-pseudocode}

\subsubsection{Page-to-token coefficients}
All comparisons are performed within a page's rollout group. Binary switches $U$ and $S$ enable UAP and SCR: $(U,S)=(0,0)$ gives \blockmethod{}, $(1,0)$ adds UAP, $(0,1)$ adds SCR, and $(1,1)$ gives full P2TPO. With UAP, the comparison set $\mathcal S_k$ contains all valid rollouts; otherwise it contains only rollouts matching reference $k$. Setting $\lambda=0$ removes consensus feedback while retaining outcome completion and routing; setting $\beta=0$ gives uniform block allocation.

\begin{algorithm}
\caption{Construct page-to-token coefficients}
\label{alg:p2tpo-credit}
\begin{algorithmic}[1]
\Require Raw page-reward dimensions and availability masks, reference outcomes, token supports, masks, and routing weights
\Require Switches $U,S$; weights $\omega,\lambda$; dispersion tolerance $\epsilon$
\Ensure Token advantages $A$ for the sampled responses
\For{each page's rollout group}
    \State Clip raw page-reward dimensions to $[0,1]$ and compute masked means $r_i$ (Appendix~\ref{app:credit-specification})
    \State Normalize $(r_i)_i$ within the eligible same-prompt group to obtain $A_i^{\mathrm P}$
    \State Initialize local contributions $C_{i,t}\gets0$
    \For{each reference $k$ with nonempty comparison set $\mathcal S_k$}
        \State $Q_{i,k}\gets s_{i,k}$ if matched, otherwise $0$, for $i\in\mathcal S_k$
        \State $(\mu_k,\sigma_k)\gets$ mean and population standard deviation of $(Q_{i,k})_{i\in\mathcal S_k}$
        \If{$\sigma_k>\epsilon$}
            \State $a_{i,k}\gets(Q_{i,k}-\mu_k)/\sigma_k$ for $i\in\mathcal S_k$
        \Else
            \State $a_{i,k}\gets-U\lambda\max(0,1-\mu_k)$ for $i\in\mathcal S_k$
        \EndIf
        \For{each $i\in\mathcal S_k$}
            \If{$k\in\mathcal K_i$}
                \State $d\gets\Call{SCR}{(i,k),a_{i,k},S}$ \Comment{Algorithm~\ref{alg:p2tpo-scr}}
                \State $C_{i,t}\gets C_{i,t}+a_{i,k}d_t$ for $t\in\mathcal B_{i,k}$
            \Else
                \State $C_{i,t}\gets C_{i,t}+a_{i,k}\nu_{i,k}(t)$ for response tokens $t$
            \EndIf
        \EndFor
    \EndFor
    \State $A_{i,t}\gets(1-\omega)A_i^{\mathrm P}+\omega C_{i,t}$ for response tokens $t$
\EndFor
\State \Return $A$, held fixed during policy optimization
\end{algorithmic}
\end{algorithm}

\subsubsection{Span credit redistribution}
\label{app:scr-procedure}
Algorithm~\ref{alg:p2tpo-scr} selects the error-span mask for negative coefficients and the correct-span mask for positive coefficients. It mixes uniform and selected-token allocation with strength $\beta\in[0,1]$, applies a pre-rebate cap $D>1$, and uniformly restores the removed coefficient mass. For a nonempty selected fraction $p_{i,k}=N_{i,k}/L_{i,k}$, this is equivalent to Equation~\ref{eq:scr} with
\begin{equation}
\eta_{i,k}=\min\{\beta,p_{i,k}(D-1+\beta)\}.
\label{eq:scr-effective-strength}
\end{equation}
The resulting weights have unit mean and preserve the block's signed and absolute credit budgets. Proposition~\ref{prop:scr-concentration} in Appendix~\ref{subsec:span-scr} establishes this equivalence and the conservation identities.

\begin{algorithm}
\caption{Redistribute matched-block credit with SCR}
\label{alg:p2tpo-scr}
\begin{algorithmic}[1]
\Require Block $(i,k)$ of length $L$, coefficient $a$, and switch $S$
\Require Eligibility and correct/error masks (Appendix~\ref{app:credit-specification}), strength $\beta$, and pre-rebate cap $D$
\Ensure Nonnegative weights $d$ with $\sum_t d_t=L$
\Function{SCR}{$(i,k),a,S$}
    \If{$S=0$, $a=0$, or the block is ineligible} \State \Return $\mathbf1_L$ \EndIf
    \State Select error mask $u$ if $a<0$, otherwise select correct mask $u$
    \State $N\gets\sum_{t=1}^{L}u_t$
    \If{$N=0$} \State \Return $\mathbf1_L$ \EndIf
    \State $d^0_t\gets1-\beta+\beta L u_t/N$ for each $t$
    \State $v_t\gets\min(d^0_t,D)$ for each $t$
    \State $d_t\gets v_t+1-L^{-1}\sum_{j=1}^{L}v_j$ for each $t$
    \State \Return $d$
\EndFunction
\end{algorithmic}
\end{algorithm}

\subsubsection{Policy objective and update}
The policy update uses the clipped PPO surrogate~\citep{schulman2017ppo}.
For sampled tokens, the importance ratio is $\rho_{i,t}(\theta)=\pi_\theta(y_{i,t}\mid x,p,y_{i,<t})/\pi_{\theta_{\mathrm{old}}}(y_{i,t}\mid x,p,y_{i,<t})$. With clipping width $\varepsilon$ and dual-clip factor $\gamma$, define
\begin{equation}
\begin{aligned}
f(\rho,A)&=\min\{\rho A,\operatorname{clip}(\rho,1-\varepsilon,1+\varepsilon)A\},\\
f_{\mathrm{dual}}(\rho,A)&=\begin{cases}
\max\{f(\rho,A),\gamma A\},&A<0,\\
f(\rho,A),&A\ge0.
\end{cases}
\end{aligned}
\end{equation}
For minibatch $\mathcal D$ and response-token mask $m_{i,t}$, the objective averages over active response tokens:
\begin{equation}
\widehat{\mathcal J}(\theta)=\frac{1}{N_{\mathrm{act}}}
\sum_{i\in\mathcal D}\sum_{t=1}^{T_i}m_{i,t}
 f_{\mathrm{dual}}\!\left(\rho_{i,t}(\theta),A_{i,t}^{\mathrm{US}}\right),
\qquad N_{\mathrm{act}}=\sum_{i\in\mathcal D}\sum_{t=1}^{T_i}m_{i,t}.
\label{eq:objective}
\end{equation}
Advantages and old-policy probabilities remain fixed while differentiating this objective. Component variants use the same loss with their corresponding token coefficients; Table~\ref{tab:training-config} gives the optimization settings.

\begin{algorithm}
\caption{Optimize the shared clipped policy objective}
\label{alg:p2tpo-update}
\begin{algorithmic}[1]
\Require Sampled responses, old-policy probabilities, token advantages $A$, and masks $m$
\Require Current policy $\pi_\theta$, clipping width $\varepsilon$, and dual-clip factor $\gamma$
\Ensure Updated policy parameters $\theta$
\For{each optimizer minibatch $\mathcal D$}
    \State Evaluate current-policy probabilities and form importance ratios $\rho_{i,t}(\theta)$
    \State Compute $f_{\mathrm{dual}}(\rho_{i,t},A_{i,t})$ with fixed advantages and old probabilities
    \State $\mathcal L\gets-\widehat{\mathcal J}(\theta)$ using the response-token mean in Equation~\ref{eq:objective}
    \State Update $\theta$ by an optimizer step using $\nabla_\theta\mathcal L$
\EndFor
\State \Return $\theta$
\end{algorithmic}
\end{algorithm}

\section{Experimental Details and Supplementary Results}
\label{app:experimental-details}
\label{app:additional-analyses}

Appendix~\ref{app:experimental-protocols} collects the shared data, training, comparison, and evaluation protocols. The supplementary results then follow Section~\ref{sec:experiments}: component interventions and credit diagnostics, generality, sensitivity, and extended training, followed by computational cost. Reward-interface comparisons appear in Appendix~\ref{app:supplementary-related-work}.

\subsection{Experimental protocols}
\label{app:experimental-protocols}

\subsubsection{Training data construction and selection}
\label{app:training-data}

\paragraph{Source data and serialization.}
We constructed the training corpus by converting the original Infinity-Doc2-5M \texttt{Document\_Parsing/doc2json} annotations into full-page Markdown~\citep{huang2026infinityparser2}. Equations are written in \LaTeX{}, and tables are written in HTML. Appendix~\ref{app:prompt} provides the shared SFT and RL parsing prompt.
\paragraph{SFT data selection.}
We randomly sampled 700K and 50K page-level records from the transformed corpus as subsets for supervised fine-tuning. These subsets provide the SFT initialization settings reported in Tables ~\ref{tab:all-benchmarks} and ~\ref{tab:initialization-backbone}.
\paragraph{RL data selection.}
We used an expert model to re-annotate the transformed corpus and compared its annotations with the corresponding original labels. We ranked records by annotation agreement as a proxy for label quality and retained the 100,000 records with the highest agreement. Expert annotations were used solely for quality assessment and sample selection; RL training retained the original annotations. We sampled 12,800 and 64,000 records from this pool for the 100-update and 500-update experiments, respectively.

\subsubsection{Document parsing prompt}
\label{app:prompt}

The following prompt is used for both supervised fine-tuning and reinforcement-learning rollouts. Our preprocessing converts the \texttt{Document\_Parsing/doc2json} annotations into targets that follow this serialization.

\begin{lstlisting}[language=Python,caption={Full-page document parsing prompt.}]
DOC_PARSE_PROMPT = (
    "\nExtract all readable content from the image in natural human reading "
    "order and output the result as a single Markdown document. Format formulas "
    "as LaTeX. Format tables as HTML: <table>...</table>. Transcribe all other "
    "text as standard Markdown. Preserve the original text without translation "
    "or paraphrasing."
)
\end{lstlisting}

\subsubsection{Training configuration and run provenance}
\label{app:training-config}

Training rewards construct optimization coefficients; benchmark metrics evaluate the resulting transcriptions. Table~\ref{tab:training-config} specifies the configuration for P2TPO and its component variants. Within each setting, methods share the initialization, data and evaluation pipelines, sampling settings, and training budget. Baseline-specific normalization, importance weighting, and clipping follow Appendix~\ref{app:comparison-methods}; the P2TPO policy objective is specified in Appendix~\ref{app:method-details}.

\begin{table}[h]
\centering
\caption{Training configuration for P2TPO and its component variants, which share optimizer and loss settings. Baseline-specific policy objectives are described in Appendix~\ref{app:comparison-methods}.}
\label{tab:training-config}
\footnotesize
\setlength{\tabcolsep}{5pt}
\renewcommand{\arraystretch}{1.12}
\begin{tabularx}{\linewidth}{@{}>{\raggedright\arraybackslash}p{0.27\linewidth}X@{}}
\toprule
Setting & Specification \\
\midrule
Training framework & veRL with Megatron actor training and asynchronous vLLM generation; BF16. \\
RL budget & 100 updates; 128 page prompts per update; $G=8$ responses per prompt. \\
PPO minibatch & 128 prompts, expanded to 1,024 responses; one PPO epoch. \\
Sequence limits & 8,192 prompt tokens (including image tokens) and 8,192 response tokens; total model length 16,384. \\
Rollout sampling & Temperature 1.0, top-$p=1$, top-$k$ disabled. \\
Optimizer & Megatron Adam; learning rate $10^{-6}$, moments $(0.9,0.999)$, weight decay $0.01$, constant schedule, no warmup. \\
Clipping and reduction & Gradient norm limit 1.0; PPO ratio bounds $[0.8,1.2]$ and dual-clip factor 3; global response-token mean. \\
Numerical ratio safeguard & Clamp the log importance ratio to $[-20,20]$ before exponentiation. \\
Regularization & No KL reward or loss; entropy coefficient 0. \\
Memory and parallelism & Offloading disabled; vLLM GPU-memory utilization 0.6; tensor, context, and pipeline parallel sizes 1. \\
Hardware & High-performance computing cards: 8 for RL and 64 for SFT; also used for evaluation. \\
Default credit parameters & $(\lambda,\beta,\omega)=(0.1,0.5,0.2)$; pre-rebate cap $D=8$; dispersion tolerance $\epsilon=10^{-8}$; SCR thresholds $0.8/0.8/0.8$ for text/formula/table; error runs separated by at most $g=3$ intervening tokens are merged. \\
\bottomrule
\end{tabularx}
\end{table}

\paragraph{Run aggregation.}
Primary RL Overall scores and type-specific parsing metrics are summarized across five seeds. Each additional initialization, model-scale, and backbone setting also uses five RL runs, varying the RL data seed. The two UAP ablations in Table~\ref{tab:mechanism-robustness} use the same five paired seeds as full P2TPO.

\paragraph{Study configurations.}
Table~\ref{tab:study-configurations} distinguishes the experimental series. All trained comparisons use 128 page prompts per update and the common data, sampling, and optimizer settings in Table~\ref{tab:training-config}, with method-specific objectives and the stated interventions. The 100- and 500-update studies share an SFT initialization where indicated, but use separate RL datasets and runs.

\begin{table}[h]
\centering
\caption{Configuration of each experimental series. SFT counts specify initialization data; RL counts specify training records. ``All splits'' denotes the six benchmarks and 13 splits in Table~\ref{tab:all-benchmarks}. Overall scores are summarized by means and standard deviations across five runs.}
\label{tab:study-configurations}
\footnotesize
\setlength{\tabcolsep}{3pt}
\renewcommand{\arraystretch}{1.16}
\begin{tabularx}{\linewidth}{@{}>{\raggedright\arraybackslash}p{0.19\linewidth}>{\raggedright\arraybackslash}p{0.25\linewidth}>{\raggedright\arraybackslash}p{0.13\linewidth}>{\raggedright\arraybackslash}p{0.12\linewidth}X@{}}
\toprule
Study & Initialization & RL records / updates & Summary & Evaluation \\
\midrule
Primary comparison and UAP/SCR combinations & Qwen3.5-2B; 700K SFT & 12,800 / 100 & $\mu\pm\sigma$; 5 runs & All splits; content and omission diagnostics on OmniDocBench \\
Internal UAP interventions & Qwen3.5-2B; 700K SFT & 12,800 / 100 & $\mu\pm\sigma$; 5 paired runs & All splits \\
SCR allocation controls & Qwen3.5-2B; 700K SFT & 12,800 / 100 & $\mu\pm\sigma$; 5 runs & All splits \\
Initialization and scale & Qwen3.5-2B and 4B; same 50K SFT manifest & 12,800 / 100 & $\mu\pm\sigma$; 5 data seeds & All splits \\
Additional backbone & Qwen3-VL-2B-Instruct; 700K SFT & 12,800 / 100 & $\mu\pm\sigma$; 5 data seeds & All splits \\
Parameter sensitivity & Qwen3.5-2B; 700K SFT & 12,800 / 100 & $\mu\pm\sigma$; 5 runs & All splits \\
Extended training & Qwen3.5-2B; 50K SFT & 64,000 / 500 & $\mu\pm\sigma$; 5 runs & OmniDocBench v1.6 every ten updates; other 12 splits every 50 \\
\bottomrule
\end{tabularx}
\par\vspace{3pt}
{\scriptsize\raggedright
\textit{Note.} The extended-training series is separate from the 100-update comparisons. Frozen-response diagnostics involve no training update and are specified separately in Appendix~\ref{app:qualitative-records}.\par}
\end{table}

\subsubsection{Comparison methods}
\label{app:comparison-methods}

The evaluated response-level baselines are GRPO, GSPO, GDPO, and RDPO; \blockmethod{}, +UAP, and +SCR are component controls (Table~\ref{tab:all-benchmarks}). 

\paragraph{Response-level baselines.}
\textbf{GRPO}~\citep{shao2024deepseekmath} standardizes scalar page rewards within each prompt's rollout group and broadcasts the resulting advantage across each response. \textbf{GSPO}~\citep{zheng2025gspo} retains group-relative response advantages while using length-normalized sequence importance ratios and sequence-level clipping. \textbf{GDPO}~\citep{liu2026gdpo} normalizes each reward dimension separately within the rollout group, aggregates the normalized dimensions, and applies batch-wise advantage normalization. \textbf{RDPO}~\citep{bai2026rdpo} applies magnitude-aware quantile normalization followed by Mahalanobis whitening within active reward subspaces before aggregation. Each evaluated interface produces one advantage per sampled response.

\paragraph{Component ablations.}
\blockmethod{} assigns relative advantages to matched reference blocks alongside page feedback. Its +UAP variant adds outcome completion, consensus feedback, and omission routing; +SCR redistributes matched-block credit w/o UAP. Full P2TPO combines both components (Appendix~\ref{app:credit-pseudocode}).

\subsubsection{Benchmark protocols and score conventions}
\label{app:external}

We evaluate frozen checkpoints w/o benchmark-specific fine-tuning. OmniDocBench v1.6, Real5-OmniDocBench, Wild-OmniDocBench, DocPTBench, and PureDocBench all use the repository's OmniDocBench v1.6 evaluation engine. FinixDocBench alone uses its official Markdown evaluator through the repository adapter. Prompts, decoding, preprocessing, and post-processing are fixed across methods within each benchmark. All headline scores range from 0 to 100, with higher values better.

\paragraph{Datasets and evaluated scope.}

\begin{table}[h]
\centering
\caption{Benchmark scope and evaluation engines. Page counts specify the evaluated sets. Engine names identify the shared OmniDocBench v1.6 engine or the official FinixDocBench Markdown evaluator.}
\label{tab:benchmark-protocols}
\small
\setlength{\tabcolsep}{3pt}
\renewcommand{\arraystretch}{1.12}
\begin{tabularx}{\linewidth}{@{}>{\raggedright\arraybackslash}p{0.27\linewidth}>{\raggedright\arraybackslash}p{0.14\linewidth}>{\raggedright\arraybackslash}X>{\raggedright\arraybackslash}p{0.16\linewidth}@{}}
\toprule
Benchmark & Pages & Evaluated Scope & Engine \\
\midrule
OmniDocBench v1.6~\citep{ouyang2024omnidocbench,omnidoc2026release} & 1,651 & Official v1.6 ground truth, including 1,355 v1.5 pages. & OmniDocBench v1.6 \\
Real5-OmniDocBench~\citep{zhou2026real5} & 1,355 per condition & Scanning, Warping, Skew, Illumination, Screen photo; shared v1.5 ground truth. & OmniDocBench v1.6 \\
Wild-OmniDocBench~\citep{li2026wildomni} & 1,350 & Real captures of a subset of v1.5 pages. & OmniDocBench v1.6 \\
DocPTBench SynReal~\citep{du2025docptbench,docpt2025protocol} & 1,381 & Photographed set: simulated and real captures. & OmniDocBench v1.6 \\
DocPTBench Dewarp~\citep{du2025docptbench,docpt2025protocol} & 1,381 & Unwarping set: rectified Photographed images. & OmniDocBench v1.6 \\
PureDocBench~\citep{li2026puredocbench,pure2026protocol} & 1,475 per condition & Clean, Digital, Real; ground truth uses the v1.6 page schema. & OmniDocBench v1.6 \\
FinixDocBench~\citep{wang2026finixdoc,finix2026release} & 542 & Track 1 Digital (242) and Track 2 Photo (300). & FinixDocBench \\
\bottomrule
\end{tabularx}
\end{table}

We evaluate the complete published DocPTBench Photographed (SynReal) and Unwarping (Dewarp) sets, each containing 1,381 images~\citep{du2025docptbench,docpt2025protocol}. SynReal comprises 981 simulated captures and 400 real photographs of 100 source documents under four conditions; Dewarp rectifies the entire Photographed set. Only parsing is evaluated: translation metrics such as BLEU, chrF, METEOR, and STEDS are not used.

FinixDocBench combines the 242 Digital pages and 300 Photo pages into one 542-page evaluation set; Track 3 Huge is excluded. We report pooled metrics for this combined set, rather than separate track scores or an equal-weight mean of two track-level scores. Real5-OmniDocBench and Wild-OmniDocBench reuse OmniDocBench source pages and should be interpreted as related robustness evaluations.

\paragraph{Metric definitions and aggregation.}
\label{app:diagnostics}

For each benchmark evaluated with the OmniDocBench v1.6 engine, the headline composite is
\begin{equation}
 \mathrm{Overall}_{\mathrm{OmniDocBench}}
 =\frac{100}{3}\bigl[(1-E_{\mathrm{text}})+S_{\mathrm{CDM}}+S_{\mathrm{TEDS}}\bigr],
 \label{eq:eval-overall}
\end{equation}
where all three input metrics lie in $[0,1]$~\citep{omnidoc2026release}. Text edit distance first pools edit operations within an eligible page, dividing by the sum of maximum prediction/reference lengths, then averages across eligible pages. CDM and TEDS first average element scores within a page and then macro-average across eligible pages. The report reads the evaluator's page-level CDM and TEDS outputs, rather than the raw global element averages. Ground-truth formula/table pages w/o scored records receive zero page similarity.

DocPTBench images and annotations are passed through this same engine. Its Overall is the composite in Eq.~(\ref{eq:eval-overall}), not a numerical conversion of a native DocPTBench headline score. The $1-E_{\mathrm{text}}$ term is part of the OmniDocBench composite definition. Formula edit distance and reading-order edit distance retain their lower-is-better directions and are reported separately; they do not enter this composite. Table~\ref{tab:component-diagnostics} presents CDM and TEDS on the normalized $[0,1]$ scale.

For FinixDocBench, the official evaluator and adapter use
\begin{equation}
 S_{\mathrm{FinixDocBench}}=\frac{100(1-E_{\mathrm{text}})+100(1-E_{\mathrm{order}})+S_{\mathrm{TEDS},100}}{3},
 \label{eq:eval-finix}
\end{equation}
with the adapter clipping the result to $[0,100]$~\citep{finix2026release}. Text and text-only reading-order edit distances are macro-averaged over applicable page groups after within-page pooling; TEDS is averaged over table samples and reported on a $[0,100]$ scale. FinixDocBench therefore replaces the formula component with reading-order accuracy. Headline scores are compared between methods within a benchmark, not directly across benchmarks with different composites.

\paragraph{Omission diagnostics and uncertainty.}
Element omission is the fraction of eligible reference elements that are missing. Local omission is the fraction of omitted reference characters within matched text blocks and excludes whole-element omissions. Table~\ref{tab:component-diagnostics} reports observed omission rates. Across-seed standard deviations describe variability in final evaluation scores. The metric definitions and aggregation rules above specify the inspected evaluation implementation.

\subsection{Component ablations}
\label{app:mechanism-study}

Table~\ref{tab:all-benchmarks} compares reference-element routing and the UAP/SCR combinations; Table~\ref{tab:mechanism-robustness} reports the complete intervention results as means $\pm$ standard deviations under the configurations in Table~\ref{tab:study-configurations}. This section supplements those results with intervention definitions, protocols, and detailed comparisons.

\paragraph{Reference configurations.}
Full P2TPO, w/o UAP (\blockmethod{} + SCR), and w/o SCR (\blockmethod{} + UAP) reproduce Table~\ref{tab:all-benchmarks}. The w/o UAP configuration retains SCR while removing outcome completion, consensus feedback, and omission routing; all other configurations retain outcome completion. Scores and standard deviations are rounded to two decimal places, while score differences are computed before display rounding.

\subsubsection{Internal UAP interventions}
\label{app:uap-ablations}

\paragraph{Ablation protocol.}
The UAP ablations in Table~\ref{tab:mechanism-robustness} follow the primary configuration in Table~\ref{tab:study-configurations}, using full P2TPO's five paired seeds and keeping the mixing weight $\omega$ fixed.
\begin{itemize}[leftmargin=*,nosep]
    \item \textbf{P2TPO w/o consensus feedback ($\lambda=0$).} Setting $\lambda=0$ removes the advantage assigned at imperfect reward consensus. Outcome completion, relative normalization for non-consensus groups, omission routing, and SCR are retained.
    \item \textbf{P2TPO w/o omission routing.} Missing reference elements retain zero-valued outcomes in the group statistics, while their token-level advantage contributions are omitted from Equation~\ref{eq:composition}. The removed contributions are not redistributed; consensus feedback and SCR remain enabled.
\end{itemize}

\paragraph{Feedback activation and omission delivery.}
Full P2TPO has higher means than the no-consensus and no-routing variants on eight and 11 of 13 splits, respectively (Table~\ref{tab:mechanism-robustness}). These comparisons assess imperfect-consensus feedback and delivery of missing-element coefficients as distinct parts of UAP. The higher means without consensus feedback occur on Real5-OmniDocBench Scanning, Illumination, and Screen photo and both DocPTBench splits; without omission routing, they occur on Scanning and SynReal. The evidence supports benefits across most tested splits, with split-dependent effects.

\begin{samepage}
\subsubsection{SCR allocation controls}
\label{app:scr-allocation-controls}

Removing SCR ($\beta=0$) eliminates both nonuniform weighting and its alignment with token correctness. We therefore compare aligned SCR with two controls that retain the redistribution strength while changing where credit is assigned. The existing full P2TPO and P2TPO w/o SCR (\blockmethod{} + UAP) results in Table~\ref{tab:all-benchmarks} provide the aligned and uniform reference configurations.\par
\end{samepage}

\paragraph{Experimental protocol.}
Both controls follow the primary five-run configuration in Table~\ref{tab:study-configurations}, retaining UAP, page feedback, and the remaining training settings. Perturbations operate only on valid tokens within eligible matched blocks, after SCR has computed its final coefficients. The original mask selects error spans for negative block credit and correct positions for positive block credit.

\begin{itemize}[leftmargin=*,nosep]
    \item \textbf{Circular shift.} We jointly rotate the selected-token mask and its coefficient sequence by an offset sampled uniformly from $\{0,\ldots,L-1\}$, where $L$ is the valid-token count. This preserves cyclic order and spacing while randomizing alignment with the original token labels. The zero offset is included.
    \item \textbf{Random support.} We sample the same number of valid positions uniformly w/o replacement. Existing selected and unselected coefficient values are copied to the new support and its complement, respectively. This randomizes token correspondence and disrupts the original cyclic span structure.
\end{itemize}

Both controls preserve the selected-token count and complete multiset of final coefficient values within each block, including the cap-and-rebate results. Hence $\sum_t\widetilde d_t=\sum_t d_t=L$, and the signed and absolute source-block credit budgets are unchanged. Reference-derived correct/error labels and the response-validity mask remain fixed; only the SCR support and its associated coefficients are reassigned among valid tokens within the same block. A fixed mask seed makes assignments repeatable for each rollout block at a given training step, using a separate random stream from rollout sampling.

Comparing aligned SCR with these controls examines the contribution of alignment-informed placement at the same per-block coefficient budget; comparing circular shifts with random support examines the role of cyclic span structure. The uniform reference measures the effect of removing redistribution altogether.

\paragraph{Results and analysis.}
The SCR columns of Table~\ref{tab:mechanism-robustness} compare the two controls with aligned SCR and uniform allocation. Differences below are in mean Overall-score points.
\begin{itemize}[leftmargin=*,nosep]
    \item \textbf{Alignment-informed placement.} The largest gains over random support occur on PureDocBench Real (+1.00), FinixDocBench (+0.89), and PureDocBench Clean (+0.84); over circular shift, the largest gains occur on PureDocBench Digital/Clean (+0.95/+0.83).
    \item \textbf{Comparison with uniform allocation.} Together with the cross-split counts in Section~\ref{sec:complementarity}, these comparisons show that preserving coefficient values and selected-token counts generally does not reproduce aligned SCR's gains. Alignment-informed placement contributes beyond nonuniform weighting alone.
    \item \textbf{Control-specific patterns.} Circular shifts outperform random support in mean scores across nine splits. Additionally, they surpass aligned SCR on FinixDocBench (+0.06) and on DocPTBench SynReal/Dewarp (+0.42/+0.18), respectively. Furthermore, random support exceeds aligned SCR on Dewarp (+0.07). Among these configurations, uniform allocation yields the highest mean score on Real5-OmniDocBench Scanning and DocPTBench Dewarp.
\end{itemize}

\subsection{Credit-assignment diagnostics}
\label{app:qualitative-records}

\begin{figure}[ht]
\centering
\includegraphics[width=\linewidth]{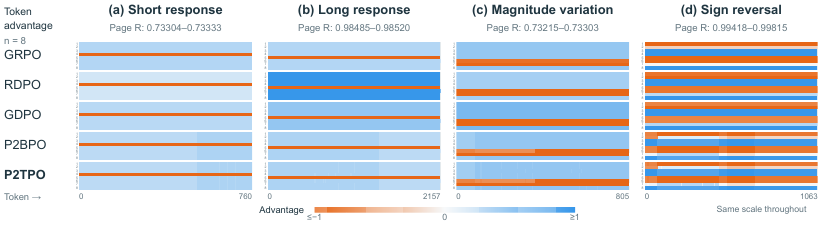}
\caption{\textbf{Frozen responses / no policy update.} Final token-wise advantages on four selected saved rollout groups. Methods are evaluated on common responses; each method row contains eight rollout bands. Horizontal position indexes generated tokens, and panel headers give page-reward ranges. Red/blue denote negative/positive advantages on a shared zero-centered scale, with colors saturating beyond $\pm1$. Panels show (a,b) shorter/longer responses, (c) magnitude variation across rollouts, and (d) within-response sign changes.}
\label{fig:credit-examples}
\end{figure}

The responses in Figure~\ref{fig:credit-examples} were sampled from a frozen checkpoint. Its panels~(c,d) supply the heatmaps in Figure~\ref{fig:credit-sign-reversal}(a,b), respectively; the corresponding lower profiles show responses R6 and R7. P2TPO uses the default credit parameters in Table~\ref{tab:training-config}, and RDPO's covariance is fitted to each source batch.

\paragraph{Output-level diagnostics.}
Table~\ref{tab:component-diagnostics} evaluates trained outputs, complementing the coefficient comparisons on fixed outputs in Figure~\ref{fig:credit-examples}. Appendix~\ref{app:diagnostics} defines the content and omission metrics and their uncertainty reporting.

\subsection{Generality across initialization, scale, and backbone}
\label{app:generality-efficiency}

Table~\ref{tab:initialization-backbone-full} supplements Table~\ref{tab:initialization-backbone} with pre-RL scores and standard deviations for the three additional model and SFT initialization settings. Each method is evaluated over five RL runs using distinct data seeds under the configurations in Table~\ref{tab:study-configurations}. The two comparisons in which GDPO has the higher mean are Qwen3.5-2B/50K on DocPTBench SynReal (76.47 versus 76.31) and Qwen3-VL-2B-Instruct/700K on PureDocBench Digital (76.57 versus 76.51).

\paragraph{Initialization and model scale.}
The Qwen3.5-2B experiments use initial checkpoints obtained by SFT on 50K or 700K pages. For the model-scale experiments, Qwen3.5-2B and Qwen3.5-4B checkpoints are prepared using the same 50K-page SFT manifest. Each checkpoint defines a separate setting for comparing RL algorithms, assessing their applicability across initial policies and model scales.

\paragraph{Generality across backbones.}
Qwen3-VL-2B-Instruct receives document SFT on 700K pages to obtain its initial checkpoint, followed by the same 100-update RL budget. Comparing RL methods from this shared checkpoint evaluates their applicability to another backbone.

\begin{table}[h]
\centering
\caption{Cross-benchmark RL evaluation across model scales and SFT-initialized checkpoints. Metrics and splits follow Table~\ref{tab:all-benchmarks}. Within each panel, Init reports pre-RL scores and RL methods share the starting checkpoint and 100-update budget.}
\label{tab:initialization-backbone-full}
\scriptsize
\setlength{\tabcolsep}{2pt}
\setlength{\aboverulesep}{1.5pt}
\setlength{\belowrulesep}{1.5pt}
\renewcommand{\arraystretch}{1.12}
\begin{tabularx}{\linewidth}{@{}>{\hspace*{0.7em}}ll*{6}{>{\centering\arraybackslash}X}@{}}
\toprule
\textbf{Benchmark} & \textbf{Split} & \textbf{Init} & \textbf{GRPO} & \textbf{GSPO} & \textbf{GDPO} & \textbf{RDPO} & \textbf{P2TPO} \\
\midrule
\rowcolor{black!5}[0pt][0pt]
\multicolumn{8}{@{}l@{}}{\makebox[\linewidth][l]{\rule{0pt}{2.6ex}(a) \textbf{Qwen3.5-2B}\quad SFT data: \textbf{50K pages}\hspace*{0.6em}}} \\
OmniDocBench & {\hspace*{0.4em}\rmfamily v1.6} & 90.28 & \meanstd{91.00}{0.44} & \meanstd{90.39}{0.22} & \meanstd{\underline{91.05}}{0.36} & \meanstd{90.85}{0.25} & \meanstd{\mathbf{91.62}}{0.30} \\
\cmidrule[0.2pt](lr){2-8}
\multirow{3}{*}{PureDocBench} & {\hspace*{0.4em}\rmfamily Clean} & 80.60 & \meanstd{80.83}{1.19} & \meanstd{81.01}{0.53} & \meanstd{\underline{81.02}}{1.00} & \meanstd{79.75}{1.15} & \meanstd{\mathbf{81.98}}{0.62} \\
 & {\hspace*{0.4em}\rmfamily Digital} & 76.20 & \meanstd{\underline{78.26}}{1.30} & \meanstd{78.11}{0.44} & \meanstd{77.85}{0.77} & \meanstd{77.27}{1.08} & \meanstd{\mathbf{79.16}}{0.59} \\
 & {\hspace*{0.4em}\rmfamily Real} & 64.64 & \meanstd{\underline{69.82}}{1.42} & \meanstd{68.66}{0.70} & \meanstd{69.58}{0.81} & \meanstd{69.20}{0.86} & \meanstd{\mathbf{71.01}}{0.72} \\
\cmidrule[0.2pt](lr){2-8}
FinixDocBench & {\hspace*{0.4em}\rmfamily ---} & 69.46 & \meanstd{69.45}{1.93} & \meanstd{70.11}{0.36} & \meanstd{\underline{70.95}}{1.12} & \meanstd{\underline{70.95}}{2.89} & \meanstd{\mathbf{73.01}}{1.72} \\
\cmidrule[0.2pt](lr){2-8}
\multirow{5}{*}{Real5-OmniDocBench} & {\hspace*{0.4em}\rmfamily Scanning} & 89.27 & \meanstd{90.66}{0.37} & \meanstd{90.23}{0.12} & \meanstd{\underline{90.97}}{0.23} & \meanstd{\underline{90.97}}{0.30} & \meanstd{\mathbf{91.29}}{0.25} \\
 & {\hspace*{0.4em}\rmfamily Warping} & 87.66 & \meanstd{\underline{89.56}}{0.33} & \meanstd{89.06}{0.26} & \meanstd{89.47}{0.50} & \meanstd{89.53}{0.42} & \meanstd{\mathbf{89.86}}{0.19} \\
 & {\hspace*{0.4em}\rmfamily Skew} & 85.35 & \meanstd{88.02}{0.75} & \meanstd{87.04}{0.35} & \meanstd{\underline{88.11}}{0.48} & \meanstd{87.61}{0.68} & \meanstd{\mathbf{88.44}}{0.29} \\
 & {\hspace*{0.4em}\rmfamily Illumination} & 89.51 & \meanstd{90.81}{0.49} & \meanstd{90.35}{0.33} & \meanstd{\underline{91.10}}{0.28} & \meanstd{90.83}{0.24} & \meanstd{\mathbf{91.23}}{0.27} \\
 & {\hspace*{0.4em}\rmfamily Screen photo} & 88.45 & \meanstd{89.79}{0.44} & \meanstd{89.36}{0.33} & \meanstd{\underline{90.43}}{0.37} & \meanstd{90.24}{0.33} & \meanstd{\mathbf{90.50}}{0.21} \\
\cmidrule[0.2pt](lr){2-8}
\multirow{2}{*}{DocPTBench} & {\hspace*{0.4em}\rmfamily SynReal} & 74.19 & \meanstd{75.57}{0.54} & \meanstd{75.21}{0.39} & \meanstd{\mathbf{76.47}}{0.48} & \meanstd{76.23}{1.02} & \meanstd{\underline{76.31}}{0.43} \\
 & {\hspace*{0.4em}\rmfamily Dewarp} & 81.72 & \meanstd{82.43}{0.79} & \meanstd{82.47}{0.36} & \meanstd{\underline{83.26}}{0.71} & \meanstd{82.90}{0.79} & \meanstd{\mathbf{83.63}}{0.40} \\
\cmidrule[0.2pt](lr){2-8}
Wild-OmniDocBench & {\hspace*{0.4em}\rmfamily ---} & 80.60 & \meanstd{83.67}{0.22} & \meanstd{82.99}{0.37} & \meanstd{83.62}{0.41} & \meanstd{\underline{84.27}}{0.62} & \meanstd{\mathbf{84.48}}{0.31} \\
\midrule
\rowcolor{black!5}[0pt][0pt]
\multicolumn{8}{@{}l@{}}{\makebox[\linewidth][l]{\rule{0pt}{2.6ex}(b) \textbf{Qwen3.5-4B}\quad SFT data: \textbf{50K pages}\hspace*{0.6em}}} \\
OmniDocBench & {\hspace*{0.4em}\rmfamily v1.6} & 91.45 & \meanstd{\underline{92.13}}{0.41} & \meanstd{91.99}{0.29} & \meanstd{92.07}{0.21} & \meanstd{91.54}{0.70} & \meanstd{\mathbf{92.53}}{0.26} \\
\cmidrule[0.2pt](lr){2-8}
\multirow{3}{*}{PureDocBench} & {\hspace*{0.4em}\rmfamily Clean} & 83.50 & \meanstd{\underline{84.41}}{0.41} & \meanstd{84.28}{0.44} & \meanstd{84.07}{0.37} & \meanstd{82.90}{1.32} & \meanstd{\mathbf{84.95}}{0.48} \\
 & {\hspace*{0.4em}\rmfamily Digital} & 79.46 & \meanstd{\underline{81.80}}{0.62} & \meanstd{81.06}{0.38} & \meanstd{81.47}{0.20} & \meanstd{80.14}{1.45} & \meanstd{\mathbf{82.34}}{0.48} \\
 & {\hspace*{0.4em}\rmfamily Real} & 68.77 & \meanstd{73.55}{0.53} & \meanstd{72.52}{0.44} & \meanstd{\underline{73.61}}{0.26} & \meanstd{71.86}{1.77} & \meanstd{\mathbf{74.48}}{0.48} \\
\cmidrule[0.2pt](lr){2-8}
FinixDocBench & {\hspace*{0.4em}\rmfamily ---} & 71.55 & \meanstd{74.33}{1.65} & \meanstd{\underline{74.43}}{0.49} & \meanstd{73.59}{1.37} & \meanstd{69.38}{7.14} & \meanstd{\mathbf{76.74}}{1.49} \\
\cmidrule[0.2pt](lr){2-8}
\multirow{5}{*}{Real5-OmniDocBench} & {\hspace*{0.4em}\rmfamily Scanning} & 91.18 & \meanstd{\underline{92.04}}{0.34} & \meanstd{91.89}{0.22} & \meanstd{91.81}{0.18} & \meanstd{91.52}{0.60} & \meanstd{\mathbf{92.23}}{0.10} \\
 & {\hspace*{0.4em}\rmfamily Warping} & 89.41 & \meanstd{91.02}{0.36} & \meanstd{90.84}{0.43} & \meanstd{\underline{91.18}}{0.25} & \meanstd{90.22}{0.73} & \meanstd{\mathbf{91.45}}{0.44} \\
 & {\hspace*{0.4em}\rmfamily Skew} & 89.64 & \meanstd{\underline{90.95}}{0.23} & \meanstd{90.40}{0.31} & \meanstd{90.78}{0.30} & \meanstd{90.02}{0.67} & \meanstd{\mathbf{91.33}}{0.33} \\
 & {\hspace*{0.4em}\rmfamily Illumination} & 91.13 & \meanstd{92.48}{0.39} & \meanstd{92.33}{0.13} & \meanstd{\underline{92.59}}{0.14} & \meanstd{91.71}{0.66} & \meanstd{\mathbf{92.80}}{0.30} \\
 & {\hspace*{0.4em}\rmfamily Screen photo} & 90.15 & \meanstd{\underline{91.91}}{0.26} & \meanstd{91.71}{0.23} & \meanstd{91.87}{0.23} & \meanstd{91.08}{0.77} & \meanstd{\mathbf{92.16}}{0.28} \\
\cmidrule[0.2pt](lr){2-8}
\multirow{2}{*}{DocPTBench} & {\hspace*{0.4em}\rmfamily SynReal} & 73.86 & \meanstd{\underline{77.86}}{0.86} & \meanstd{76.51}{0.73} & \meanstd{77.76}{0.72} & \meanstd{76.37}{1.22} & \meanstd{\mathbf{78.19}}{0.40} \\
 & {\hspace*{0.4em}\rmfamily Dewarp} & 82.55 & \meanstd{84.02}{0.65} & \meanstd{83.98}{0.28} & \meanstd{\underline{84.11}}{0.66} & \meanstd{83.23}{1.25} & \meanstd{\mathbf{84.88}}{0.58} \\
\cmidrule[0.2pt](lr){2-8}
Wild-OmniDocBench & {\hspace*{0.4em}\rmfamily ---} & 82.93 & \meanstd{\underline{86.04}}{0.78} & \meanstd{85.34}{0.30} & \meanstd{85.83}{0.37} & \meanstd{85.10}{0.77} & \meanstd{\mathbf{86.28}}{0.20} \\
\midrule
\rowcolor{black!5}[0pt][0pt]
\multicolumn{8}{@{}l@{}}{\makebox[\linewidth][l]{\rule{0pt}{2.6ex}(c) \textbf{Qwen3-VL-2B-Instruct}\quad SFT data: \textbf{700K pages}\hspace*{0.6em}}} \\
OmniDocBench & {\hspace*{0.4em}\rmfamily v1.6} & 90.27 & \meanstd{90.45}{0.24} & \meanstd{90.55}{0.16} & \meanstd{\underline{90.57}}{0.23} & \meanstd{88.25}{1.17} & \meanstd{\mathbf{91.29}}{0.09} \\
\cmidrule[0.2pt](lr){2-8}
\multirow{3}{*}{PureDocBench} & {\hspace*{0.4em}\rmfamily Clean} & 78.32 & \meanstd{\underline{81.81}}{0.20} & \meanstd{81.80}{0.24} & \meanstd{81.75}{0.30} & \meanstd{79.67}{0.89} & \meanstd{\mathbf{81.95}}{0.33} \\
 & {\hspace*{0.4em}\rmfamily Digital} & 71.77 & \meanstd{76.08}{0.29} & \meanstd{76.29}{0.13} & \meanstd{\mathbf{76.57}}{0.20} & \meanstd{73.98}{1.07} & \meanstd{\underline{76.51}}{0.28} \\
 & {\hspace*{0.4em}\rmfamily Real} & 59.72 & \meanstd{65.45}{0.45} & \meanstd{65.44}{0.30} & \meanstd{\underline{65.76}}{0.25} & \meanstd{63.53}{0.96} & \meanstd{\mathbf{66.25}}{0.29} \\
\cmidrule[0.2pt](lr){2-8}
FinixDocBench & {\hspace*{0.4em}\rmfamily ---} & 64.55 & \meanstd{67.68}{0.63} & \meanstd{67.75}{0.61} & \meanstd{\underline{67.95}}{0.43} & \meanstd{65.18}{1.41} & \meanstd{\mathbf{68.23}}{0.24} \\
\cmidrule[0.2pt](lr){2-8}
\multirow{5}{*}{Real5-OmniDocBench} & {\hspace*{0.4em}\rmfamily Scanning} & 89.11 & \meanstd{\underline{90.07}}{0.21} & \meanstd{89.42}{0.21} & \meanstd{89.89}{0.32} & \meanstd{87.79}{0.80} & \meanstd{\mathbf{90.46}}{0.32} \\
 & {\hspace*{0.4em}\rmfamily Warping} & 84.41 & \meanstd{85.19}{0.75} & \meanstd{84.17}{0.47} & \meanstd{\underline{85.62}}{0.67} & \meanstd{84.46}{0.43} & \meanstd{\mathbf{86.37}}{0.52} \\
 & {\hspace*{0.4em}\rmfamily Skew} & 76.94 & \meanstd{76.83}{1.48} & \meanstd{77.74}{0.62} & \meanstd{\underline{77.82}}{0.95} & \meanstd{76.66}{1.52} & \meanstd{\mathbf{79.79}}{1.24} \\
 & {\hspace*{0.4em}\rmfamily Illumination} & 87.81 & \meanstd{89.26}{0.35} & \meanstd{88.79}{0.36} & \meanstd{\underline{89.38}}{0.45} & \meanstd{87.15}{0.55} & \meanstd{\mathbf{89.65}}{0.23} \\
 & {\hspace*{0.4em}\rmfamily Screen photo} & 87.42 & \meanstd{\underline{88.46}}{0.46} & \meanstd{88.09}{0.30} & \meanstd{88.39}{0.46} & \meanstd{86.01}{1.05} & \meanstd{\mathbf{89.33}}{0.35} \\
\cmidrule[0.2pt](lr){2-8}
\multirow{2}{*}{DocPTBench} & {\hspace*{0.4em}\rmfamily SynReal} & 62.70 & \meanstd{\underline{67.80}}{0.67} & \meanstd{67.18}{0.85} & \meanstd{67.23}{0.97} & \meanstd{65.91}{1.41} & \meanstd{\mathbf{68.94}}{0.59} \\
 & {\hspace*{0.4em}\rmfamily Dewarp} & 77.02 & \meanstd{\underline{78.28}}{0.54} & \meanstd{77.38}{0.23} & \meanstd{77.97}{0.67} & \meanstd{76.16}{0.42} & \meanstd{\mathbf{79.34}}{0.53} \\
\cmidrule[0.2pt](lr){2-8}
Wild-OmniDocBench & {\hspace*{0.4em}\rmfamily ---} & 76.47 & \meanstd{79.20}{0.58} & \meanstd{78.58}{0.13} & \meanstd{\underline{79.65}}{0.40} & \meanstd{77.78}{0.38} & \meanstd{\mathbf{80.32}}{0.19} \\
\bottomrule
\end{tabularx}
\par\vspace{3pt}
{\scriptsize\raggedright
\textit{Note.} Init reports the measured pre-RL score of the shared SFT-initialized checkpoint; no standard deviation is reported. RL entries are mean $\pm$ standard deviation over five runs with distinct RL data seeds; best and second-best RL means are bold and underlined, respectively.
All panels use 100 RL updates with 128 page prompts per update (12,800 page-prompt presentations). The Qwen3.5-2B/700K setting is reported in Table~\ref{tab:all-benchmarks}.
\par}
\end{table}

\subsection{Robustness to hyperparameter choices}
\label{app:parameter-sensitivity}

We vary one parameter at a time around the default P2TPO configuration $(\lambda,\beta,\omega)=(0.1,0.5,0.2)$ while holding the other two fixed (Figure~\ref{fig:parameter-sensitivity}; Table~\ref{tab:parameter-sensitivity}). Here $\lambda$ controls imperfect-consensus feedback, $\beta$ controls SCR concentration, and $\omega$ controls the local contribution relative to page feedback. Entries report means $\pm$ standard deviations over five seeds; the default column reproduces Table~\ref{tab:all-benchmarks}.

\begin{figure}[h]
\centering
\includegraphics[width=\linewidth]{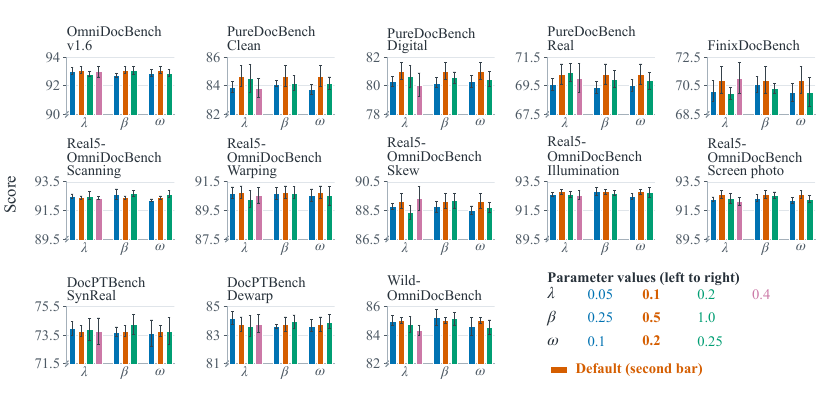}
\caption{One-at-a-time sensitivity of P2TPO to the imperfect-consensus strength $\lambda$, span-concentration strength $\beta$, and page--local mixing weight $\omega$. In each sweep, the other two parameters remain at their default values. Bars show five-seed means; error bars show standard deviations.}
\label{fig:parameter-sensitivity}
\end{figure}

\begin{table}[h]
\centering
\caption{One-at-a-time parameter sensitivity. Benchmark names and splits follow Table~\ref{tab:all-benchmarks}. The default column uses $(\lambda,\beta,\omega)=(0.1,0.5,0.2)$. The $\lambda$, $\beta$, and $\omega$ columns vary the indicated parameter while keeping the other two at their default values.}
\label{tab:parameter-sensitivity}
\label{tab:sensitivity-lambda}
\label{tab:sensitivity-beta}
\label{tab:sensitivity-omega}
\scriptsize
\setlength{\tabcolsep}{1pt}
\renewcommand{\arraystretch}{1.08}
\resizebox{\linewidth}{!}{%
\begin{tabular}{@{}l@{\hspace{3pt}}l@{\hspace{3pt}}*{8}{c}@{}}
\toprule
& & \multicolumn{1}{c}{Default} & \multicolumn{3}{c}{$\lambda$ sweep} & \multicolumn{2}{c}{$\beta$ sweep} & \multicolumn{2}{c}{$\omega$ sweep} \\
\cmidrule(lr){3-3}\cmidrule(lr){4-6}\cmidrule(lr){7-8}\cmidrule(lr){9-10}
Benchmark & Split & $(0.1,0.5,0.2)$ & $0.05$ & $0.2$ & $0.4$ & $0.25$ & $1.0$ & $0.1$ & $0.25$ \\
\midrule
OmniDocBench & v1.6 & \meanstd{93.12}{0.23} & \meanstd{93.02}{0.24} & \meanstd{92.84}{0.19} & \meanstd{93.00}{0.38} & \meanstd{92.75}{0.13} & \meanstd{93.07}{0.28} & \meanstd{92.88}{0.25} & \meanstd{92.92}{0.23} \\
\multirow{3}{*}{PureDocBench} & Clean & \meanstd{84.68}{0.74} & \meanstd{83.92}{0.38} & \meanstd{84.51}{0.99} & \meanstd{83.84}{0.67} & \meanstd{84.15}{0.21} & \meanstd{84.22}{0.52} & \meanstd{83.74}{0.34} & \meanstd{84.17}{0.45} \\
 & Digital & \meanstd{81.03}{0.60} & \meanstd{80.31}{0.33} & \meanstd{80.68}{0.76} & \meanstd{80.08}{0.82} & \meanstd{80.21}{0.35} & \meanstd{80.58}{0.39} & \meanstd{80.30}{0.44} & \meanstd{80.48}{0.53} \\
 & Real & \meanstd{70.33}{0.70} & \meanstd{69.61}{0.42} & \meanstd{70.44}{0.62} & \meanstd{70.04}{1.00} & \meanstd{69.39}{0.40} & \meanstd{69.98}{0.59} & \meanstd{69.54}{0.41} & \meanstd{69.87}{0.59} \\
FinixDocBench & --- & \meanstd{70.92}{0.92} & \meanstd{70.14}{0.74} & \meanstd{69.97}{0.42} & \meanstd{71.07}{1.09} & \meanstd{70.62}{0.56} & \meanstd{70.34}{0.34} & \meanstd{70.03}{0.61} & \meanstd{70.07}{0.99} \\
\multirow{5}{*}{\shortstack[l]{Real5-\\OmniDocBench}} & Scanning & \meanstd{92.40}{0.12} & \meanstd{92.49}{0.17} & \meanstd{92.53}{0.28} & \meanstd{92.37}{0.12} & \meanstd{92.65}{0.36} & \meanstd{92.70}{0.21} & \meanstd{92.22}{0.09} & \meanstd{92.66}{0.23} \\
 & Warping & \meanstd{90.77}{0.45} & \meanstd{90.74}{0.41} & \meanstd{90.29}{0.59} & \meanstd{90.58}{0.55} & \meanstd{90.70}{0.44} & \meanstd{90.74}{0.42} & \meanstd{90.56}{0.45} & \meanstd{90.54}{0.65} \\
 & Skew & \meanstd{89.15}{0.52} & \meanstd{88.78}{0.22} & \meanstd{88.37}{0.49} & \meanstd{89.36}{0.84} & \meanstd{88.78}{0.38} & \meanstd{89.20}{0.52} & \meanstd{88.51}{0.31} & \meanstd{88.72}{0.36} \\
 & Illumination & \meanstd{92.83}{0.17} & \meanstd{92.63}{0.15} & \meanstd{92.62}{0.21} & \meanstd{92.59}{0.34} & \meanstd{92.82}{0.27} & \meanstd{92.71}{0.18} & \meanstd{92.51}{0.20} & \meanstd{92.79}{0.35} \\
 & Screen photo & \meanstd{92.62}{0.29} & \meanstd{92.31}{0.14} & \meanstd{92.34}{0.36} & \meanstd{92.14}{0.25} & \meanstd{92.39}{0.24} & \meanstd{92.57}{0.23} & \meanstd{92.21}{0.21} & \meanstd{92.32}{0.24} \\
\multirow{2}{*}{DocPTBench} & SynReal & \meanstd{73.79}{0.39} & \meanstd{73.96}{0.49} & \meanstd{73.90}{0.76} & \meanstd{73.77}{0.93} & \meanstd{73.73}{0.34} & \meanstd{74.26}{0.71} & \meanstd{73.65}{0.91} & \meanstd{73.80}{0.94} \\
 & Dewarp & \meanstd{83.76}{0.51} & \meanstd{84.22}{0.46} & \meanstd{83.63}{0.75} & \meanstd{83.80}{0.64} & \meanstd{83.62}{0.15} & \meanstd{83.95}{0.46} & \meanstd{83.66}{0.42} & \meanstd{83.94}{0.48} \\
Wild-OmniDocBench & --- & \meanstd{85.02}{0.21} & \meanstd{85.01}{0.35} & \meanstd{84.74}{0.58} & \meanstd{84.33}{0.32} & \meanstd{85.23}{0.57} & \meanstd{85.16}{0.45} & \meanstd{84.62}{0.63} & \meanstd{84.56}{0.50} \\
\bottomrule
\end{tabular}%
}
\end{table}

The sweeps cover $\lambda\in\{0.05,0.1,0.2,0.4\}$, $\beta\in\{0.25,0.5,1.0\}$, and $\omega\in\{0.1,0.2,0.25\}$. For any fixed split, the maximum difference between tested values is at most 1.10 points for $\lambda$ and 0.94 points for either $\beta$ or $\omega$; every reported mean remains within 0.95 points of its default. All eight distinct RL configurations improve upon their shared pre-RL scores on every split, and each exceeds the strongest response-level baseline on at least 11 of 13 splits (Table~\ref{tab:parameter-sensitivity}). These bounded score changes and retained cross-benchmark gains indicate robustness within the tested one-at-a-time ranges.

\subsection{Extended training over 500 updates}
\label{app:extended-training}

GRPO, GSPO, GDPO, RDPO, and full P2TPO (UAP and SCR) each start from the same Qwen3.5-2B checkpoint obtained by SFT on 50K pages and undergo 500 RL updates. The extension uses 64,000 RL records, 128 page prompts per update, and five runs per method with distinct RL data seeds. All 13 splits use this initialization and training protocol; OmniDocBench v1.6 is evaluated every ten updates, and the other 12 splits every 50. The measured pre-RL Overall scores are 90.28 on OmniDocBench v1.6, 85.35 on Real5-OmniDocBench Skew, and 74.19 on DocPTBench SynReal; these provide the Init references in Figure~\ref{fig:extended-training}. As summarized in Table~\ref{tab:study-configurations}, this series uses separate RL runs and data from the 100-update experiments in Tables~\ref{tab:all-benchmarks} and~\ref{tab:initialization-backbone}; changes below use its own checkpoint means.
Figure~\ref{fig:extended-training}(a) shows the OmniDocBench v1.6 trajectory; Table~\ref{tab:extended-training} reports selected checkpoints and temporal averages. The additional evaluations span updates 50 through 500. Table~\ref{tab:extended-checkpoints} reports all 120 additional split--checkpoint results, with means and standard deviations across five runs. Metrics and split definitions follow Table~\ref{tab:all-benchmarks}.

\begin{table}[h]
\centering
\caption{Five-run checkpoint scores (mean $\pm$ standard deviation) and temporal averages. $\Delta$ is update 500 minus update 100, computed before rounding. Early and late windows average the ten checkpoint means at updates 110--200 and 410--500, respectively; these summarize repeated evaluations of the same runs.}
\label{tab:extended-training}
\small
\renewcommand{\meanstd}[2]{\ensuremath{#1\,\pm\,#2}}
\setlength{\tabcolsep}{4pt}
\renewcommand{\arraystretch}{1.15}
\begin{tabularx}{\linewidth}{@{}l*{2}{>{\centering\arraybackslash}X}rrr@{}}
\toprule
Method & Update 100 & Update 500 & $\Delta$ & Early window & Late window \\
\midrule
GRPO & \meanstd{90.60}{0.48} & \meanstd{90.14}{0.70} & $-0.46$ & 90.89 & 90.45 \\
GSPO & \meanstd{90.48}{0.28} & \meanstd{90.31}{0.10} & $-0.17$ & 90.66 & 89.95 \\
GDPO & \meanstd{90.99}{0.32} & \meanstd{90.42}{0.65} & $-0.57$ & 90.99 & 90.41 \\
RDPO & \meanstd{91.05}{0.26} & \meanstd{89.91}{1.42} & $-1.14$ & 90.86 & 89.86 \\
P2TPO & \meanstd{\mathbf{91.38}}{0.42} & \meanstd{\mathbf{91.49}}{0.65} & $+0.12$ & \textbf{91.44} & \textbf{91.24} \\
\bottomrule
\end{tabularx}
\end{table}

\paragraph{Performance retention.}
At update 500, P2TPO reaches 91.49 on OmniDocBench v1.6, 1.21 points above Init and a reported 0.12-point gain over update 100 using unrounded scores. It has the highest mean score at 45 of the 50 evaluated checkpoints. Its early- and late-window means are 91.44 and 91.24, compared with decreases of 0.44, 0.71, 0.58, and 0.99 points for GRPO, GSPO, GDPO, and RDPO (Table~\ref{tab:extended-training}). The trajectory is nonmonotonic, ranging from a peak of 91.72 at update 180 to 90.78 at update 410 before the final checkpoint (Figure~\ref{fig:extended-training}(a)). These comparisons support retaining the measured gain under the longer budget.

\paragraph{Performance across additional splits.}
P2TPO has the highest mean at update 500 on all 12 additional splits, with margins of 0.25--1.94 points over the strongest response-level baseline in each split. It strictly leads at 78 of the 120 evaluated split--checkpoint pairs and ties for the highest mean at two more. Figure~\ref{fig:extended-training}(b,c) presents complementary patterns: Real5-OmniDocBench Skew improves from 88.44 to 89.00 between updates 100 and 500, whereas DocPTBench SynReal remains nearly unchanged at 76.34 versus 76.36. Their final margins over the strongest baseline are 1.94 and 1.60 points, respectively. Skew also rises across the final four evaluations (88.12, 88.41, 88.66, and 89.00), while SynReal fluctuates around its earlier performance.

Across the 12 additional splits, P2TPO's update-500 mean is higher than its update-100 mean on seven splits and lower on five. Changes over this interval vary across methods and splits: all five methods improve on FinixDocBench, and GSPO improves on the three PureDocBench splits. Together with Table~\ref{tab:extended-checkpoints}, these trends support the final comparative-performance claim while documenting the nonmonotonic, task-dependent trajectories.

\paragraph{Relation to the component studies.}
The longer-horizon study assesses comparative parsing performance under a 500-update budget. The UAP interventions and SCR allocation controls in Appendix~\ref{app:mechanism-study} separately assess feedback construction and alignment-informed placement; the longer-horizon results complement those component comparisons.

\begingroup
\makeatletter
\renewcommand{\fnum@table}{\normalsize\tablename~\thetable}
\makeatother
\ifdim\linewidth>440pt
  \fontsize{7.5}{9}\selectfont
\else
  \fontsize{6}{7.5}\selectfont
  \renewcommand{\meanstd}[2]{\scalebox{0.84}{\ensuremath{#1{\scriptstyle\,\pm\,#2}}}}
\fi
\setlength{\tabcolsep}{2pt}
\renewcommand{\arraystretch}{1.05}
\setlength{\LTcapwidth}{\linewidth}
\setlength{\LTleft}{\fill}
\setlength{\LTright}{\fill}
\begin{longtable}{@{}l*{10}{c}@{}}
\caption{\normalsize Complete checkpoint results on the 12 additional splits during 500-update Qwen3.5-2B/50K training. Columns denote RL updates. Entries are Overall scores (mean $\pm$ standard deviation over five runs), evaluated every 50 updates. Bold denotes the highest mean at each checkpoint, determined before rounding.}
\label{tab:extended-checkpoints}\\
\toprule
Method & 50 & 100 & 150 & 200 & 250 & 300 & 350 & 400 & 450 & 500 \\
\midrule
\endfirsthead
\caption[]{\normalsize Complete checkpoint results on the 12 additional splits (continued).}\\
\toprule
Method & 50 & 100 & 150 & 200 & 250 & 300 & 350 & 400 & 450 & 500 \\
\midrule
\endhead
\multicolumn{11}{r@{}}{\textit{Continued on next page}} \\
\endfoot
\bottomrule
\endlastfoot
\multicolumn{11}{@{}l}{\textit{PureDocBench: Clean}} \\*
GRPO & \meanstd{\mathbf{80.69}}{0.60} & \meanstd{80.64}{1.14} & \meanstd{80.54}{0.60} & \meanstd{79.88}{0.90} & \meanstd{80.23}{0.48} & \meanstd{79.99}{1.06} & \meanstd{80.26}{1.58} & \meanstd{80.49}{0.79} & \meanstd{79.93}{2.10} & \meanstd{79.59}{1.37} \\* %
GSPO & \meanstd{79.86}{0.30} & \meanstd{79.97}{0.63} & \meanstd{80.24}{0.49} & \meanstd{80.49}{0.59} & \meanstd{80.17}{0.83} & \meanstd{80.93}{1.02} & \meanstd{79.70}{1.39} & \meanstd{79.80}{0.96} & \meanstd{78.99}{1.16} & \meanstd{80.34}{0.42} \\* %
GDPO & \meanstd{80.29}{0.44} & \meanstd{81.03}{2.25} & \meanstd{81.09}{1.23} & \meanstd{\mathbf{81.95}}{0.78} & \meanstd{80.68}{0.63} & \meanstd{79.93}{1.63} & \meanstd{80.96}{1.53} & \meanstd{80.87}{0.81} & \meanstd{78.51}{1.36} & \meanstd{79.37}{1.23} \\* %
RDPO & \meanstd{80.03}{0.67} & \meanstd{80.68}{0.48} & \meanstd{80.58}{0.79} & \meanstd{80.42}{1.04} & \meanstd{80.94}{1.35} & \meanstd{79.68}{0.78} & \meanstd{79.81}{0.42} & \meanstd{78.91}{1.30} & \meanstd{78.85}{2.06} & \meanstd{78.97}{0.46} \\* %
P2TPO & \meanstd{80.51}{0.52} & \meanstd{\mathbf{81.72}}{0.80} & \meanstd{\mathbf{81.45}}{0.28} & \meanstd{80.36}{0.82} & \meanstd{\mathbf{81.20}}{0.39} & \meanstd{\mathbf{81.24}}{0.61} & \meanstd{\mathbf{81.06}}{0.71} & \meanstd{\mathbf{81.69}}{0.77} & \meanstd{\mathbf{82.02}}{1.50} & \meanstd{\mathbf{81.28}}{0.38} \\ %
\midrule
\multicolumn{11}{@{}l}{\textit{PureDocBench: Digital}} \\*
GRPO & \meanstd{77.41}{0.67} & \meanstd{77.62}{1.35} & \meanstd{77.82}{0.66} & \meanstd{77.75}{0.80} & \meanstd{77.77}{0.64} & \meanstd{77.37}{1.23} & \meanstd{77.40}{1.63} & \meanstd{77.53}{1.01} & \meanstd{77.59}{1.83} & \meanstd{77.06}{1.19} \\* %
GSPO & \meanstd{76.63}{0.50} & \meanstd{76.65}{0.73} & \meanstd{77.06}{0.43} & \meanstd{77.79}{0.41} & \meanstd{77.43}{0.90} & \meanstd{78.28}{0.98} & \meanstd{76.18}{1.54} & \meanstd{76.94}{1.04} & \meanstd{76.35}{1.05} & \meanstd{77.43}{0.40} \\* %
GDPO & \meanstd{77.56}{0.45} & \meanstd{78.67}{2.21} & \meanstd{78.18}{1.36} & \meanstd{\mathbf{79.09}}{0.66} & \meanstd{77.94}{0.54} & \meanstd{77.17}{1.41} & \meanstd{\mathbf{78.34}}{1.76} & \meanstd{78.24}{0.72} & \meanstd{75.43}{1.36} & \meanstd{77.00}{1.11} \\* %
RDPO & \meanstd{77.42}{0.87} & \meanstd{78.56}{0.40} & \meanstd{77.69}{0.62} & \meanstd{77.46}{1.10} & \meanstd{78.26}{1.34} & \meanstd{77.20}{0.54} & \meanstd{76.48}{0.60} & \meanstd{75.98}{1.21} & \meanstd{76.05}{1.93} & \meanstd{76.56}{0.56} \\* %
P2TPO & \meanstd{\mathbf{77.58}}{0.79} & \meanstd{\mathbf{79.26}}{0.93} & \meanstd{\mathbf{78.82}}{0.23} & \meanstd{77.80}{0.80} & \meanstd{\mathbf{78.64}}{0.46} & \meanstd{\mathbf{78.66}}{1.06} & \meanstd{78.15}{0.53} & \meanstd{\mathbf{78.77}}{0.72} & \meanstd{\mathbf{79.09}}{1.44} & \meanstd{\mathbf{78.27}}{0.34} \\ %
\midrule
\multicolumn{11}{@{}l}{\textit{PureDocBench: Real}} \\*
GRPO & \meanstd{69.26}{0.43} & \meanstd{69.77}{0.85} & \meanstd{69.93}{0.55} & \meanstd{69.23}{0.52} & \meanstd{70.25}{0.22} & \meanstd{69.71}{0.87} & \meanstd{69.03}{1.91} & \meanstd{69.86}{0.59} & \meanstd{69.73}{2.15} & \meanstd{69.08}{1.40} \\* %
GSPO & \meanstd{67.52}{0.42} & \meanstd{67.41}{0.66} & \meanstd{68.50}{0.57} & \meanstd{69.12}{0.26} & \meanstd{68.82}{0.80} & \meanstd{69.24}{1.16} & \meanstd{67.65}{1.46} & \meanstd{68.20}{1.18} & \meanstd{68.19}{0.69} & \meanstd{68.64}{0.54} \\* %
GDPO & \meanstd{\mathbf{69.32}}{0.36} & \meanstd{69.98}{1.96} & \meanstd{70.19}{0.95} & \meanstd{\mathbf{71.31}}{0.70} & \meanstd{70.16}{0.67} & \meanstd{70.06}{1.39} & \meanstd{\mathbf{70.70}}{1.55} & \meanstd{69.75}{0.77} & \meanstd{67.44}{1.58} & \meanstd{68.86}{1.14} \\* %
RDPO & \meanstd{68.65}{0.00} & \meanstd{69.77}{0.93} & \meanstd{69.96}{1.11} & \meanstd{69.22}{1.46} & \meanstd{70.30}{1.42} & \meanstd{69.60}{0.60} & \meanstd{68.57}{0.77} & \meanstd{67.93}{1.26} & \meanstd{68.15}{1.45} & \meanstd{68.94}{0.75} \\* %
P2TPO & \meanstd{69.16}{0.42} & \meanstd{\mathbf{70.60}}{0.75} & \meanstd{\mathbf{70.74}}{0.42} & \meanstd{70.23}{1.03} & \meanstd{\mathbf{70.78}}{0.51} & \meanstd{\mathbf{70.99}}{1.05} & \meanstd{70.52}{0.91} & \meanstd{\mathbf{70.94}}{0.84} & \meanstd{\mathbf{70.99}}{1.09} & \meanstd{\mathbf{70.49}}{0.53} \\ %
\midrule
\multicolumn{11}{@{}l}{\textit{FinixDocBench}} \\*
GRPO & \meanstd{69.09}{0.45} & \meanstd{69.71}{0.70} & \meanstd{70.47}{0.56} & \meanstd{69.88}{1.18} & \meanstd{69.90}{1.01} & \meanstd{69.44}{1.56} & \meanstd{70.22}{0.45} & \meanstd{70.84}{0.48} & \meanstd{\mathbf{71.25}}{1.20} & \meanstd{71.38}{1.20} \\* %
GSPO & \meanstd{68.74}{0.55} & \meanstd{68.87}{0.06} & \meanstd{69.31}{0.50} & \meanstd{70.13}{0.43} & \meanstd{70.64}{0.94} & \meanstd{70.87}{1.19} & \meanstd{68.65}{0.86} & \meanstd{69.13}{1.39} & \meanstd{68.77}{1.25} & \meanstd{69.60}{1.09} \\* %
GDPO & \meanstd{69.35}{0.51} & \meanstd{69.18}{1.66} & \meanstd{69.39}{1.28} & \meanstd{\mathbf{71.71}}{1.06} & \meanstd{69.35}{2.11} & \meanstd{69.79}{1.93} & \meanstd{70.63}{1.27} & \meanstd{70.41}{0.86} & \meanstd{69.29}{0.28} & \meanstd{69.59}{1.19} \\* %
RDPO & \meanstd{\mathbf{69.46}}{0.80} & \meanstd{69.76}{1.28} & \meanstd{69.18}{1.05} & \meanstd{69.74}{1.59} & \meanstd{\mathbf{71.64}}{0.73} & \meanstd{71.18}{1.47} & \meanstd{68.29}{1.48} & \meanstd{69.41}{0.61} & \meanstd{69.74}{1.27} & \meanstd{69.80}{0.88} \\* %
P2TPO & \meanstd{\mathbf{69.46}}{0.42} & \meanstd{\mathbf{71.13}}{0.72} & \meanstd{\mathbf{71.86}}{0.89} & \meanstd{70.60}{1.02} & \meanstd{71.12}{0.35} & \meanstd{\mathbf{72.18}}{1.18} & \meanstd{\mathbf{72.18}}{1.87} & \meanstd{\mathbf{70.95}}{0.50} & \meanstd{71.19}{1.19} & \meanstd{\mathbf{71.63}}{1.28} \\ %
\midrule
\multicolumn{11}{@{}l}{\textit{DocPTBench: SynReal}} \\*
GRPO & \meanstd{75.83}{0.39} & \meanstd{75.89}{0.97} & \meanstd{75.41}{1.33} & \meanstd{75.03}{1.09} & \meanstd{76.18}{0.35} & \meanstd{75.01}{1.50} & \meanstd{73.85}{2.83} & \meanstd{74.11}{1.34} & \meanstd{74.87}{2.50} & \meanstd{74.32}{1.57} \\* %
GSPO & \meanstd{74.56}{0.42} & \meanstd{74.86}{0.54} & \meanstd{75.17}{0.89} & \meanstd{74.69}{0.83} & \meanstd{73.37}{1.04} & \meanstd{74.54}{1.23} & \meanstd{72.15}{1.48} & \meanstd{72.28}{1.31} & \meanstd{72.85}{0.57} & \meanstd{73.67}{1.50} \\* %
GDPO & \meanstd{\mathbf{76.33}}{0.11} & \meanstd{\mathbf{76.99}}{1.39} & \meanstd{75.97}{0.61} & \meanstd{\mathbf{76.03}}{1.09} & \meanstd{75.31}{0.82} & \meanstd{75.36}{1.74} & \meanstd{\mathbf{75.95}}{2.47} & \meanstd{74.73}{1.21} & \meanstd{72.93}{2.45} & \meanstd{74.26}{1.75} \\* %
RDPO & \meanstd{76.28}{0.59} & \meanstd{76.80}{0.69} & \meanstd{75.40}{0.51} & \meanstd{74.94}{1.03} & \meanstd{75.27}{0.80} & \meanstd{74.21}{0.53} & \meanstd{73.51}{1.88} & \meanstd{72.58}{2.58} & \meanstd{74.72}{0.98} & \meanstd{74.76}{1.03} \\* %
P2TPO & \meanstd{75.85}{0.24} & \meanstd{76.34}{0.45} & \meanstd{\mathbf{76.48}}{0.58} & \meanstd{75.57}{0.93} & \meanstd{\mathbf{76.29}}{1.01} & \meanstd{\mathbf{76.20}}{0.89} & \meanstd{75.51}{0.85} & \meanstd{\mathbf{76.25}}{0.29} & \meanstd{\mathbf{76.57}}{1.53} & \meanstd{\mathbf{76.36}}{1.28} \\ %
\midrule
\multicolumn{11}{@{}l}{\textit{DocPTBench: Dewarp}} \\*
GRPO & \meanstd{82.87}{0.34} & \meanstd{82.31}{0.79} & \meanstd{82.46}{1.26} & \meanstd{82.43}{0.79} & \meanstd{82.88}{0.52} & \meanstd{82.02}{1.06} & \meanstd{81.51}{1.86} & \meanstd{81.77}{0.48} & \meanstd{80.58}{2.03} & \meanstd{81.62}{1.57} \\* %
GSPO & \meanstd{80.99}{0.64} & \meanstd{81.41}{0.52} & \meanstd{81.89}{0.37} & \meanstd{82.62}{1.05} & \meanstd{82.44}{0.39} & \meanstd{\mathbf{82.46}}{0.68} & \meanstd{80.61}{1.35} & \meanstd{80.55}{1.17} & \meanstd{80.65}{0.37} & \meanstd{80.99}{0.78} \\* %
GDPO & \meanstd{\mathbf{83.22}}{0.21} & \meanstd{83.48}{0.77} & \meanstd{\mathbf{83.49}}{0.54} & \meanstd{\mathbf{83.63}}{0.42} & \meanstd{82.69}{0.42} & \meanstd{82.09}{1.67} & \meanstd{\mathbf{82.74}}{1.35} & \meanstd{81.93}{0.54} & \meanstd{80.71}{1.73} & \meanstd{80.97}{1.61} \\* %
RDPO & \meanstd{82.85}{0.37} & \meanstd{83.28}{0.52} & \meanstd{82.71}{0.66} & \meanstd{82.02}{0.68} & \meanstd{82.21}{1.35} & \meanstd{81.16}{0.88} & \meanstd{80.77}{1.05} & \meanstd{80.26}{1.63} & \meanstd{81.56}{0.93} & \meanstd{80.76}{1.38} \\* %
P2TPO & \meanstd{82.93}{0.59} & \meanstd{\mathbf{83.78}}{0.78} & \meanstd{83.13}{0.43} & \meanstd{83.01}{0.82} & \meanstd{\mathbf{82.95}}{0.84} & \meanstd{82.41}{0.38} & \meanstd{82.65}{0.54} & \meanstd{\mathbf{82.54}}{0.30} & \meanstd{\mathbf{83.15}}{1.32} & \meanstd{\mathbf{83.24}}{1.09} \\ %
\midrule
\multicolumn{11}{@{}l}{\textit{Real5-OmniDocBench: Scanning}} \\*
GRPO & \meanstd{90.83}{0.22} & \meanstd{90.63}{0.25} & \meanstd{90.67}{0.48} & \meanstd{90.53}{0.39} & \meanstd{90.70}{0.26} & \meanstd{90.40}{0.25} & \meanstd{89.13}{1.08} & \meanstd{90.41}{0.38} & \meanstd{89.75}{1.08} & \meanstd{89.77}{0.71} \\* %
GSPO & \meanstd{89.98}{0.22} & \meanstd{90.10}{0.21} & \meanstd{90.36}{0.14} & \meanstd{90.07}{0.66} & \meanstd{90.17}{0.57} & \meanstd{89.88}{0.52} & \meanstd{88.83}{0.97} & \meanstd{89.27}{0.61} & \meanstd{89.63}{0.15} & \meanstd{89.42}{0.36} \\* %
GDPO & \meanstd{\mathbf{91.44}}{0.16} & \meanstd{\mathbf{91.28}}{0.43} & \meanstd{\mathbf{91.17}}{0.21} & \meanstd{90.98}{0.36} & \meanstd{90.68}{0.38} & \meanstd{90.44}{0.65} & \meanstd{90.33}{0.61} & \meanstd{90.16}{0.30} & \meanstd{89.76}{0.87} & \meanstd{89.94}{0.91} \\* %
RDPO & \meanstd{90.98}{0.22} & \meanstd{91.25}{0.31} & \meanstd{90.68}{0.31} & \meanstd{90.06}{0.48} & \meanstd{89.87}{0.66} & \meanstd{89.29}{0.82} & \meanstd{89.65}{0.79} & \meanstd{89.70}{0.54} & \meanstd{89.70}{0.46} & \meanstd{89.27}{0.35} \\* %
P2TPO & \meanstd{90.87}{0.32} & \meanstd{91.21}{0.41} & \meanstd{90.83}{0.27} & \meanstd{\mathbf{91.18}}{0.60} & \meanstd{\mathbf{90.95}}{0.21} & \meanstd{\mathbf{90.77}}{0.24} & \meanstd{\mathbf{90.42}}{0.24} & \meanstd{\mathbf{90.74}}{0.08} & \meanstd{\mathbf{90.78}}{0.57} & \meanstd{\mathbf{91.40}}{0.75} \\ %
\midrule
\multicolumn{11}{@{}l}{\textit{Real5-OmniDocBench: Warping}} \\*
GRPO & \meanstd{89.09}{0.35} & \meanstd{89.08}{0.58} & \meanstd{89.40}{0.67} & \meanstd{88.99}{0.50} & \meanstd{89.93}{0.35} & \meanstd{89.07}{0.47} & \meanstd{88.53}{1.19} & \meanstd{89.07}{0.37} & \meanstd{88.59}{1.36} & \meanstd{88.43}{0.93} \\* %
GSPO & \meanstd{87.84}{0.41} & \meanstd{88.43}{0.41} & \meanstd{88.78}{0.28} & \meanstd{88.41}{0.75} & \meanstd{88.51}{0.49} & \meanstd{89.00}{0.48} & \meanstd{87.01}{0.91} & \meanstd{87.59}{0.76} & \meanstd{87.95}{0.75} & \meanstd{87.64}{0.42} \\* %
GDPO & \meanstd{\mathbf{89.47}}{0.06} & \meanstd{\mathbf{89.93}}{0.31} & \meanstd{\mathbf{89.76}}{0.31} & \meanstd{\mathbf{90.03}}{0.34} & \meanstd{89.68}{0.64} & \meanstd{88.85}{0.95} & \meanstd{89.32}{1.00} & \meanstd{89.31}{0.25} & \meanstd{87.99}{1.40} & \meanstd{88.52}{1.25} \\* %
RDPO & \meanstd{88.93}{0.52} & \meanstd{89.39}{0.26} & \meanstd{89.01}{0.38} & \meanstd{88.81}{0.71} & \meanstd{88.98}{0.40} & \meanstd{88.10}{0.53} & \meanstd{88.35}{0.77} & \meanstd{87.53}{1.23} & \meanstd{88.21}{0.89} & \meanstd{88.01}{0.45} \\* %
P2TPO & \meanstd{89.25}{0.29} & \meanstd{89.82}{0.49} & \meanstd{89.75}{0.38} & \meanstd{89.49}{0.61} & \meanstd{\mathbf{89.99}}{0.61} & \meanstd{\mathbf{89.96}}{0.31} & \meanstd{\mathbf{89.99}}{0.57} & \meanstd{\mathbf{89.68}}{0.25} & \meanstd{\mathbf{89.71}}{0.66} & \meanstd{\mathbf{90.00}}{0.88} \\ %
\midrule
\multicolumn{11}{@{}l}{\textit{Real5-OmniDocBench: Skew}} \\*
GRPO & \meanstd{87.79}{0.41} & \meanstd{87.76}{0.43} & \meanstd{87.56}{1.22} & \meanstd{87.63}{0.95} & \meanstd{\mathbf{88.26}}{0.43} & \meanstd{87.54}{0.78} & \meanstd{86.72}{1.20} & \meanstd{87.26}{0.35} & \meanstd{86.73}{1.51} & \meanstd{87.06}{0.96} \\* %
GSPO & \meanstd{86.09}{0.56} & \meanstd{86.51}{0.24} & \meanstd{86.16}{0.42} & \meanstd{86.31}{0.90} & \meanstd{86.29}{0.62} & \meanstd{86.47}{0.81} & \meanstd{84.83}{1.11} & \meanstd{85.99}{0.56} & \meanstd{86.22}{0.41} & \meanstd{86.10}{0.62} \\* %
GDPO & \meanstd{\mathbf{88.28}}{0.20} & \meanstd{88.39}{0.78} & \meanstd{\mathbf{88.52}}{0.12} & \meanstd{\mathbf{88.81}}{0.25} & \meanstd{88.16}{0.43} & \meanstd{87.35}{0.85} & \meanstd{87.88}{1.25} & \meanstd{86.80}{0.38} & \meanstd{85.88}{1.42} & \meanstd{86.64}{1.42} \\* %
RDPO & \meanstd{88.08}{0.59} & \meanstd{88.39}{0.57} & \meanstd{87.69}{0.37} & \meanstd{87.55}{0.49} & \meanstd{87.57}{0.47} & \meanstd{86.75}{0.64} & \meanstd{86.90}{1.12} & \meanstd{85.69}{1.74} & \meanstd{86.25}{1.48} & \meanstd{85.81}{0.92} \\* %
P2TPO & \meanstd{87.64}{0.20} & \meanstd{\mathbf{88.44}}{0.69} & \meanstd{88.40}{0.61} & \meanstd{87.84}{0.79} & \meanstd{\mathbf{88.26}}{0.55} & \meanstd{\mathbf{88.16}}{0.55} & \meanstd{\mathbf{88.12}}{0.50} & \meanstd{\mathbf{88.41}}{0.26} & \meanstd{\mathbf{88.66}}{0.79} & \meanstd{\mathbf{89.00}}{0.77} \\ %
\midrule
\multicolumn{11}{@{}l}{\textit{Real5-OmniDocBench: Illumination}} \\*
GRPO & \meanstd{90.22}{0.30} & \meanstd{90.47}{0.39} & \meanstd{90.79}{0.63} & \meanstd{90.24}{0.67} & \meanstd{90.85}{0.21} & \meanstd{90.36}{0.50} & \meanstd{90.09}{1.07} & \meanstd{\mathbf{90.64}}{0.47} & \meanstd{89.65}{0.94} & \meanstd{89.87}{0.85} \\* %
GSPO & \meanstd{89.42}{0.20} & \meanstd{89.58}{0.18} & \meanstd{90.36}{0.41} & \meanstd{90.03}{0.46} & \meanstd{90.31}{0.49} & \meanstd{90.47}{0.43} & \meanstd{88.68}{0.74} & \meanstd{88.99}{0.71} & \meanstd{89.75}{0.16} & \meanstd{89.57}{0.35} \\* %
GDPO & \meanstd{\mathbf{90.79}}{0.19} & \meanstd{90.84}{0.35} & \meanstd{90.90}{0.38} & \meanstd{\mathbf{91.10}}{0.39} & \meanstd{91.22}{0.61} & \meanstd{90.09}{0.98} & \meanstd{90.49}{1.02} & \meanstd{90.37}{0.21} & \meanstd{89.49}{0.94} & \meanstd{90.35}{0.66} \\* %
RDPO & \meanstd{90.46}{0.17} & \meanstd{91.00}{0.48} & \meanstd{90.75}{0.27} & \meanstd{90.56}{0.57} & \meanstd{90.78}{0.49} & \meanstd{90.17}{0.21} & \meanstd{90.32}{0.61} & \meanstd{89.15}{0.44} & \meanstd{89.78}{0.36} & \meanstd{90.01}{0.21} \\* %
P2TPO & \meanstd{90.32}{0.36} & \meanstd{\mathbf{91.38}}{0.42} & \meanstd{\mathbf{91.28}}{0.18} & \meanstd{90.97}{0.59} & \meanstd{\mathbf{91.32}}{0.49} & \meanstd{\mathbf{90.90}}{0.26} & \meanstd{\mathbf{90.87}}{0.47} & \meanstd{90.44}{0.27} & \meanstd{\mathbf{90.84}}{0.55} & \meanstd{\mathbf{90.85}}{0.55} \\ %
\midrule
\multicolumn{11}{@{}l}{\textit{Real5-OmniDocBench: Screen photo}} \\*
GRPO & \meanstd{89.94}{0.27} & \meanstd{89.82}{0.40} & \meanstd{89.97}{0.57} & \meanstd{89.75}{0.54} & \meanstd{90.04}{0.15} & \meanstd{89.81}{0.48} & \meanstd{89.17}{1.20} & \meanstd{90.06}{0.39} & \meanstd{89.39}{1.19} & \meanstd{89.66}{0.86} \\* %
GSPO & \meanstd{88.74}{0.41} & \meanstd{89.19}{0.22} & \meanstd{89.59}{0.26} & \meanstd{89.58}{0.41} & \meanstd{89.16}{0.24} & \meanstd{89.29}{0.29} & \meanstd{87.88}{0.82} & \meanstd{88.67}{0.71} & \meanstd{88.60}{0.36} & \meanstd{88.74}{0.22} \\* %
GDPO & \meanstd{\mathbf{90.18}}{0.25} & \meanstd{\mathbf{90.44}}{0.64} & \meanstd{89.98}{0.40} & \meanstd{\mathbf{90.55}}{0.30} & \meanstd{90.18}{0.23} & \meanstd{89.65}{0.95} & \meanstd{89.92}{0.83} & \meanstd{89.80}{0.10} & \meanstd{88.53}{1.24} & \meanstd{89.97}{1.04} \\* %
RDPO & \meanstd{89.90}{0.33} & \meanstd{90.06}{0.15} & \meanstd{89.89}{0.43} & \meanstd{89.66}{0.64} & \meanstd{89.75}{0.47} & \meanstd{89.28}{0.47} & \meanstd{89.02}{0.85} & \meanstd{88.25}{0.93} & \meanstd{89.22}{0.52} & \meanstd{89.34}{0.18} \\* %
P2TPO & \meanstd{89.74}{0.14} & \meanstd{90.33}{0.31} & \meanstd{\mathbf{90.29}}{0.22} & \meanstd{90.31}{0.48} & \meanstd{\mathbf{90.74}}{0.60} & \meanstd{\mathbf{90.35}}{0.29} & \meanstd{\mathbf{90.25}}{0.40} & \meanstd{\mathbf{90.19}}{0.24} & \meanstd{\mathbf{90.10}}{0.31} & \meanstd{\mathbf{90.49}}{0.33} \\ %
\midrule
\multicolumn{11}{@{}l}{\textit{Wild-OmniDocBench}} \\*
GRPO & \meanstd{83.77}{0.36} & \meanstd{83.36}{0.33} & \meanstd{83.44}{0.95} & \meanstd{83.49}{0.63} & \meanstd{\mathbf{84.38}}{0.79} & \meanstd{83.45}{0.68} & \meanstd{82.55}{1.24} & \meanstd{83.33}{0.51} & \meanstd{82.19}{1.19} & \meanstd{82.70}{0.64} \\* %
GSPO & \meanstd{81.70}{0.28} & \meanstd{82.35}{0.45} & \meanstd{83.05}{0.12} & \meanstd{82.63}{0.96} & \meanstd{82.64}{0.56} & \meanstd{82.88}{0.54} & \meanstd{81.26}{1.32} & \meanstd{81.81}{1.17} & \meanstd{82.51}{1.33} & \meanstd{82.09}{1.01} \\* %
GDPO & \meanstd{\mathbf{84.32}}{0.44} & \meanstd{\mathbf{84.30}}{0.76} & \meanstd{\mathbf{84.47}}{0.33} & \meanstd{\mathbf{84.40}}{0.55} & \meanstd{83.98}{0.49} & \meanstd{83.38}{0.89} & \meanstd{83.18}{1.13} & \meanstd{82.94}{0.47} & \meanstd{82.37}{1.23} & \meanstd{83.16}{1.03} \\* %
RDPO & \meanstd{83.71}{0.11} & \meanstd{84.25}{0.60} & \meanstd{83.22}{0.87} & \meanstd{83.13}{0.35} & \meanstd{83.20}{0.46} & \meanstd{82.44}{0.61} & \meanstd{83.21}{0.92} & \meanstd{82.27}{0.68} & \meanstd{82.32}{0.65} & \meanstd{82.20}{0.79} \\* %
P2TPO & \meanstd{83.75}{0.16} & \meanstd{84.00}{0.48} & \meanstd{84.31}{0.35} & \meanstd{84.10}{0.60} & \meanstd{84.29}{0.48} & \meanstd{\mathbf{84.23}}{0.13} & \meanstd{\mathbf{83.94}}{0.48} & \meanstd{\mathbf{84.14}}{0.31} & \meanstd{\mathbf{83.83}}{0.41} & \meanstd{\mathbf{84.36}}{0.65} \\ %
\end{longtable}
\endgroup

\subsection{Computational cost}
\label{app:cost-controls}
\label{app:runtime}
The primary 100-update P2TPO run requires approximately 60 GPU-hours on eight high-performance computing cards. The full credit-assignment mechanism takes 1.215\,s/update, compared with 0.707\,s/update for GRPO, adding 0.51\,s/update, equivalent to approximately 0.19\% of per-update wall time. The recorded advantage-computation shares are 0.465\% for P2TPO and 0.265\% for GRPO. Generation and actor-update timing distributions are comparable to GRPO.

\section{Supplementary Theoretical Analysis}
\label{app:theory-proofs}

We establish three properties of the credit construction: the information lost by aggregation, the feedback restored by UAP, and the conditions under which SCR increases allocation to effective targets. Alignments, token supports, and masks are held fixed. The aggregation model uses common matched rollouts and a fixed additive utility; the UAP and SCR results concern the coefficients defined in Section~\ref{sec:method}.

\subsection{What is lost in aggregation?}
\label{subsec:block-contrasts}

Consider $G\ge2$ rollouts in which all $K$ reference blocks are matched, and let $\mathbf Q\in[0,1]^{G\times K}$ contain their scores. Partition the blocks into $C$ nonempty element groups $\mathcal J_c$, with $1<C<K$. Here a group can contain several blocks of the same element type; it is coarser than an individual reference element. Let $M_{k,c}>0$ for $k\in\mathcal J_c$ and $M_{k,c}=0$ otherwise, with $\sum_{k\in\mathcal J_c}M_{k,c}=1$. Let $\alpha_c>0$ and $\sum_c\alpha_c=1$. Group scores are $\mathbf R=\mathbf Q\mathbf M$ and the additive page utility is $\mathbf r=\mathbf R\boldsymbol\alpha$. Define $\mathbf H=\mathbf I_G-G^{-1}\mathbf1_G\mathbf1_G^\top$ and the unstandardized block contrasts $\mathbf B=\mathbf H\mathbf Q$.

\begin{proposition}[Strict hierarchy of aggregation losses]
\label{prop:block-information}
Block contrasts determine group contrasts $\mathbf H\mathbf R=\mathbf B\mathbf M$, which determine page contrasts $\mathbf H\mathbf r=\mathbf B\mathbf M\boldsymbol\alpha$. Each reverse determination is impossible in general. More precisely, on the space of centered matrices $\mathcal B=\{\mathbf B\in\mathbb R^{G\times K}:\mathbf H\mathbf B=\mathbf B\}$,
\begin{equation}
\{\mathbf0\}\ \subsetneq\
\{\mathbf B\in\mathcal B:\mathbf B\mathbf M=\mathbf0\}
\ \subsetneq\
\{\mathbf B\in\mathcal B:\mathbf B\mathbf M\boldsymbol\alpha=\mathbf0\}.
\label{eq:aggregation-loss-hierarchy}
\end{equation}
Both strict losses occur for scores in $[0,1]$. Consequently, deterministic processing of pooled rewards alone cannot recover all distinctions available at the preceding resolution.
\end{proposition}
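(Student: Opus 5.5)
The argument proceeds in three steps: the forward identities, the nested inclusions, and explicit score matrices that make each inclusion strict.

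\textbf{Forward identities.} By associativity of matrix products, $\mathbf H\mathbf R=\mathbf H\mathbf Q\mathbf M=\mathbf B\mathbf M$ and $\mathbf H\mathbf r=\mathbf H\mathbf Q\mathbf M\boldsymbol\alpha=\mathbf B\mathbf M\boldsymbol\alpha$. Group contrasts are therefore a fixed linear function of block contrasts, and page contrasts are a fixed linear function of group contrasts.

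\textbf{Inclusions.} The inclusion $\{\mathbf0\}\subseteq\{\mathbf B\mathbf M=\mathbf0\}$ is trivial. If $\mathbf B\mathbf M=\mathbf0$, then $\mathbf B\mathbf M\boldsymbol\alpha=\mathbf0$, which gives the second inclusion. Since $\mathbf H$ is the centering projector, every $\mathbf B$ considered lies in $\mathcal B$.

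\textbf{Strictness via explicit witnesses.} Each witness will be centered, and each will be realized as $\mathbf H\mathbf Q$ for some $\mathbf Q\in[0,1]^{G\times K}$. For a centered $\mathbf B$ with small entries, take $\mathbf Q=\tfrac12\mathbf1\mathbf1^\top+\mathbf B$. Then $\mathbf H\mathbf Q=\mathbf B$ because $\mathbf H\mathbf1=\mathbf0$ and $\mathbf H\mathbf B=\mathbf B$, and the entries of $\mathbf Q$ stay in $[0,1]$ once $\mathbf B$ is scaled so that $\|\mathbf B\|_\infty\le\tfrac12$. The witness is thus reduced to a nonzero centered matrix in the relevant kernel.

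Let $\mathbf v=(1,-1,0,\dots,0)^\top\in\mathbb R^G$; it is centered and $G\ge2$. The construction uses rank-one matrices $\mathbf B=\mathbf v\mathbf w^\top$ with $\mathbf w\in\mathbb R^K$. These are automatically centered, and $\mathbf B\mathbf M=\mathbf v(\mathbf M^\top\mathbf w)^\top$.

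For the first strict inclusion, $\mathbf w\ne\mathbf0$ must satisfy $\mathbf M^\top\mathbf w=\mathbf0$. Because $C<K$, some group $\mathcal J_c$ contains two blocks $k\ne k'$. Set $w_k=M_{k',c}$, $w_{k'}=-M_{k,c}$, and all other entries zero. Both weights are positive, so $\mathbf w\ne\mathbf0$, and the weighted contributions cancel within group $c$ while every other group receives zero. This is the ``opposing block contrasts cancel'' example.

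For the second strict inclusion, we need $\mathbf M^\top\mathbf w\ne\mathbf0$ with $\boldsymbol\alpha^\top\mathbf M^\top\mathbf w=0$. Because $C>1$, pick two groups $c\ne c'$ with representatives $k\in\mathcal J_c$ and $k'\in\mathcal J_{c'}$. Set $w_k=\alpha_{c'}/M_{k,c}$, $w_{k'}=-\alpha_c/M_{k',c'}$, and all other entries zero. Then $\mathbf M^\top\mathbf w=\alpha_{c'}\mathbf e_c-\alpha_c\mathbf e_{c'}\ne\mathbf0$, and its inner product with $\boldsymbol\alpha$ is $0$. Hence $\mathbf B\mathbf M\neq\mathbf0$ while $\mathbf B\mathbf M\boldsymbol\alpha=\mathbf0$.

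\textbf{Non-invertibility consequence.} Take $\mathbf Q$ built from a first-inclusion witness and $\mathbf Q'=\tfrac12\mathbf1\mathbf1^\top$. They have identical pooled group contrasts but different block contrasts. Any deterministic function of the pooled rewards, including normalized advantages, must therefore agree on the two inputs and cannot recover the block distinctions. The second-inclusion witness gives the same argument one level up.

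The step needing the most care is this last consequence. Normalization may act on pooled rewards rather than on contrasts, so the witness must produce \emph{equal pooled rewards}, not merely equal contrasts. This is handled by choosing $\mathbf Q$ and $\mathbf Q'$ with the same column means, which the construction above does: both have all column means equal to $\tfrac12$.
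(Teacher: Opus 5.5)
Your proof is correct and follows essentially the same route as the paper. Both use the forward identities by linearity, the same within-group cancellation vector for the first strict inclusion, and a small rank-one centered perturbation of $\tfrac12\mathbf 1\mathbf 1^\top$ whose pooled scores coincide exactly with those of $\tfrac12\mathbf 1\mathbf 1^\top$. The only cosmetic difference is in the second inclusion: you put weights $\alpha_{c'}/M_{k,c}$ and $-\alpha_c/M_{k',c'}$ on single representatives, whereas the paper assigns $v_j$ to every block of group $j$ and uses column normalization of $\mathbf M$; both give $\mathbf M^\top\mathbf w=\alpha_{c'}\mathbf e_c-\alpha_c\mathbf e_{c'}$.
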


\begin{proof}
The contrast identities follow by linearity. For the first strict inclusion, choose two blocks $k,\ell$ in the same group $c$, and a vector $\mathbf x$ with $x_k=M_{\ell,c}$, $x_\ell=-M_{k,c}$, and all other entries zero. Then $\mathbf x\ne0$ but $\mathbf x^\top\mathbf M=0$. For the second, choose two distinct groups $c,d$ and a vector $\mathbf v\in\mathbb R^C$ with $v_c=\alpha_d$, $v_d=-\alpha_c$, and other entries zero. Set $x_k=v_j$ for $k\in\mathcal J_j$. Column normalization gives $\mathbf x^\top\mathbf M=\mathbf v^\top\ne0$ but $\mathbf v^\top\boldsymbol\alpha=0$. In either construction, take a nonzero centered vector $\mathbf u\in\mathbb R^G$ and set $\mathbf B=\mathbf u\mathbf x^\top$. For sufficiently small $\delta>0$, the two score matrices $\mathbf Q_0=\tfrac12\mathbf1_G\mathbf1_K^\top$ and $\mathbf Q_1=\mathbf Q_0+\delta\mathbf B$ lie in $[0,1]^{G\times K}$. In the first construction they have identical group scores but different block contrasts; in the second they have identical page scores but different group contrasts. A deterministic function of these identical pooled inputs cannot distinguish the respective pair.
\end{proof}

This hierarchy concerns scores before standardization. Separate block normalization retains within-block contrasts when their dispersion exceeds the tolerance, but does not preserve their original scales. No information ordering is asserted after independent standardization. The result applies to a page reward only when it has the stated fixed additive form; it does not assume that every evaluation metric or training reward has this form.

\paragraph{Auxiliary routing lemma.}
Fix one rollout with disjoint nonempty block supports of lengths $L_k$, and let $b_k$ be its normalized \blockmethod{} coefficients. Their broadcast token vector on the union of these matched-block supports, before page/local composition, is the reconstruction target. Let $D_{\mathrm B}$, $D_{\mathrm E}$, and $D_{\mathrm P}$ be the minimum squared token errors using unrestricted coefficients per block, per element group, and per page, respectively. Thus
\[
D_{\mathrm E}=\min_{\boldsymbol\gamma}\sum_c\sum_{k\in\mathcal J_c}L_k(b_k-\gamma_c)^2,
\qquad D_{\mathrm P}=\min_v\sum_k L_k(b_k-v)^2.
\]
Write $N_c=\sum_{k\in\mathcal J_c}L_k$, $\bar b_c=N_c^{-1}\sum_{k\in\mathcal J_c}L_kb_k$, and $\bar b=(\sum_kL_k)^{-1}\sum_kL_kb_k$.

\begin{lemma}[Resolution of block, element-group, and page routing]
\label{lem:routing-resolution}
The optimal reconstruction errors satisfy
\begin{equation}
\begin{aligned}
D_{\mathrm B}&=0,\qquad
D_{\mathrm E}=\sum_c\sum_{k\in\mathcal J_c}L_k(b_k-\bar b_c)^2,\\
D_{\mathrm P}&=D_{\mathrm E}+\sum_cN_c(\bar b_c-\bar b)^2.
\end{aligned}
\label{eq:routing-hierarchy}
\end{equation}
Hence $D_{\mathrm B}\le D_{\mathrm E}\le D_{\mathrm P}$. The first inequality is strict exactly when some group contains unequal block coefficients; the second is strict exactly when the group means are not all equal.
\end{lemma}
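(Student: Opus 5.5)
The plan is to treat each reconstruction error as a weighted least-squares problem on the token vector and solve it in closed form. Every token in the support of block $k$ carries the same target value $b_k$. Any candidate coefficient assignment is also constant on each block, per group, or globally, according to the routing resolution. Squared token error therefore reduces to the block-weighted sums displayed before the lemma, with weight $L_k$ per block.

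\emph{Step 1: block routing.} For $D_{\mathrm B}$, I would choose the coefficient of block $k$ equal to $b_k$. This gives zero error, and since squared errors are nonnegative, $D_{\mathrm B}=0$.

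\emph{Step 2: group routing.} For $D_{\mathrm E}$, the objective separates across groups because the $\gamma_c$ are unconstrained and each appears only in its own group's sum. Within group $c$ the map $\gamma\mapsto\sum_{k\in\mathcal J_c}L_k(b_k-\gamma)^2$ is a strictly convex quadratic, since $N_c>0$ by nonemptiness. Setting its derivative to zero gives the unique minimizer $\gamma_c=\bar b_c$. Substituting yields the stated $D_{\mathrm E}$.

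\emph{Step 3: page routing.} The same argument shows $D_{\mathrm P}=\sum_kL_k(b_k-\bar b)^2$, with minimizer $v=\bar b$. The decomposition then follows from a weighted law of total variance. I would write $b_k-\bar b=(b_k-\bar b_c)+(\bar b_c-\bar b)$ for $k\in\mathcal J_c$ and expand the square. The cross term is $2(\bar b_c-\bar b)\sum_{k\in\mathcal J_c}L_k(b_k-\bar b_c)$, and this vanishes by the definition of $\bar b_c$. The last square contributes $N_c(\bar b_c-\bar b)^2$. Summing over $c$ gives the formula for $D_{\mathrm P}$.

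\emph{Step 4: inequalities and strictness.} Both inequalities are immediate because every summand is nonnegative. The strictness characterizations follow from positive weights: $L_k>0$ and $N_c>0$. Hence $D_{\mathrm E}>0$ if and only if some $b_k\ne\bar b_c$ within a group. Since $\bar b_c$ is a weighted average with positive weights, this happens exactly when the group's coefficients are not all equal. Likewise, $\sum_cN_c(\bar b_c-\bar b)^2>0$ exactly when some $\bar b_c\ne\bar b$, which is the same as the group means not all being equal.

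I expect no real obstacle. The only point needing care is the justification, via disjointness of the supports, that the token-level error equals the $L_k$-weighted block sum. Without disjointness, a token could belong to two blocks and the reduction would fail.
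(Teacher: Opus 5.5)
Your proposal is correct and follows essentially the same argument as the paper: block coefficients reproduce the target exactly, the group and page minimizers are the length-weighted means $\bar b_c$ and $\bar b$, and the decomposition comes from expanding $b_k-\bar b=(b_k-\bar b_c)+(\bar b_c-\bar b)$ with vanishing cross terms, with strictness following from positive lengths. One step deserves an explicit line: $\bar b=\bigl(\sum_c N_c\bar b_c\bigr)/\sum_c N_c$ is a positively weighted average of the group means, and this is what makes ``some $\bar b_c\ne\bar b$'' equivalent to ``the group means are not all equal.''
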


\begin{proof}
Independent block coefficients reproduce the target. Minimizing each group sum gives $\gamma_c=\bar b_c$, and minimizing the page sum gives $v=\bar b$. Expanding $b_k-\bar b=(b_k-\bar b_c)+(\bar b_c-\bar b)$, the cross terms vanish because $\sum_{k\in\mathcal J_c}L_k(b_k-\bar b_c)=0$. The decomposition and strictness conditions follow from positivity of the lengths.
\end{proof}

Both inequalities can be strict for actual \blockmethod{} coefficients. With equal block lengths of one, groups $\{1,2\}$ and $\{3\}$, and rollout scores $(0.9,0.2,0.9)$ and $(0.2,0.9,0.2)$, population normalization gives $\mathbf b=(1,-1,1)$ in the first rollout when $\epsilon<0.35$. The errors are $0<2<8/3$. A global coefficient cannot distinguish blocks within a shared group, while routing indexed by individual reference blocks matches \blockmethod{}'s resolution. The minima allow free coefficients: a particular reward-normalization rule need not attain them, and its actual reconstruction errors need not follow this ordering. The lemma compares coefficient expressivity, not policy performance or the optimality of the chosen target coefficients.

\subsection{How does UAP restore missing feedback?}
\label{subsec:uap-theory}

For one reference block, consider $n\ge1$ valid rollouts with completed outcomes $Q_i\in[0,c]$, where $c>0$ is fixed and missing outcomes equal zero. Let $\mu$ and $\sigma$ be their mean and population standard deviation. The algorithm uses the unit-ceiling specialization $c=1$. Outcome completion makes omissions available to the comparison, while the ceiling gap supplies feedback when dispersion is too small for relative normalization.

\begin{proposition}[Feedback activation and delivery by UAP]
\label{prop:uap-feedback}
Let $\lambda>0$ and $\epsilon\ge0$. UAP has the following properties.
\begin{enumerate}
\item If $0<m<n$ matched rollouts all have score $s>0$ and the other $n-m$ outcomes are missing, then $\mu=ms/n$ and $\sigma=s\sqrt{m(n-m)}/n$. When $\sigma>\epsilon$,
\begin{equation}
A^{\mathrm U}_{\mathrm{matched}}=\sqrt{\frac{n-m}{m}},\qquad
A^{\mathrm U}_{\mathrm{missing}}=-\sqrt{\frac{m}{n-m}}.
\label{eq:uap-completion-feedback}
\end{equation}
Matched-only centering instead gives zero feedback to all matched rollouts and no coefficient for omissions.
\item Whenever $\sigma\le\epsilon$, every valid rollout receives $A_i^{\mathrm U}=-\lambda(c-\mu)$. This coefficient is negative exactly when $\mu<c$ and vanishes at ceiling quality. In particular, an imperfect tie at $r<c$ receives $-\lambda(c-r)$ and an all-missing group receives $-\lambda c$.
\item A missing outcome with coefficient $a$ and routing weights $\nu_t\ge0$, $\sum_t\nu_t=1$, contributes $a\nu_t$ at generated tokens, preserving its signed and absolute mass:
\begin{equation}
\sum_t a\nu_t=a,\qquad \sum_t|a\nu_t|=|a|.
\label{eq:uap-delivery}
\end{equation}
\end{enumerate}
\end{proposition}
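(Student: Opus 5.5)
The plan is to verify the three items directly from the UAP definition in Equation~\ref{eq:uap}, specialized to a single reference block with ceiling $c$. Nothing deeper than two-point distribution moments and the finiteness of the routing weights is needed, so each item is handled separately.

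For item~1, the completed outcomes form a two-point distribution: $m$ values equal to $s$ and $n-m$ values equal to $0$.
\begin{itemize}
\item The mean is $\mu=ms/n$.
\item Using $\sigma^2=n^{-1}\sum_iQ_i^2-\mu^2=ms^2/n-m^2s^2/n^2=s^2m(n-m)/n^2$, we get $\sigma=s\sqrt{m(n-m)}/n$, which is positive because $0<m<n$ and $s>0$.
\item When $\sigma>\epsilon$, the first branch applies. A matched rollout gets $(s-ms/n)/\sigma=s(n-m)/n\cdot n/(s\sqrt{m(n-m)})=\sqrt{(n-m)/m}$.
\item A missing rollout gets $(-ms/n)/\sigma=-\sqrt{m/(n-m)}$.
\end{itemize}
For the contrast with matched-only centering, note that the comparison set then contains only the $m$ rollouts with the identical score $s$. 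Their dispersion is $0\le\epsilon$, so the \blockmethod{} rule in Equation~\ref{eq:p2bpo} returns zero. Omitted rollouts do not belong to $\mathcal I_k$ and therefore receive no coefficient at all.

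For item~2, when $\sigma\le\epsilon$ the second branch gives $A_i^{\mathrm U}=-\lambda\max(0,c-\mu)$. Since every $Q_i\le c$, we have $\mu\le c$, so the $\max$ reduces to $c-\mu$. With $\lambda>0$, this coefficient is negative exactly when $\mu<c$ and zero when $\mu=c$. For the two special cases:
\begin{itemize}
\item An imperfect tie at $r$ has $\sigma=0\le\epsilon$ and $\mu=r$, giving $-\lambda(c-r)$.
\item An all-missing group has $\sigma=0$ and $\mu=0$, giving $-\lambda c$.
\end{itemize}

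For item~3, $\sum_ta\nu_t=a\sum_t\nu_t=a$. Because $\nu_t\ge0$, we have $|a\nu_t|=|a|\nu_t$, so summing gives $|a|$.

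The only step needing care is item~1's branch condition. The identities in Equation~\ref{eq:uap-completion-feedback} hold only under $\sigma>\epsilon$. If $\epsilon\ge\sigma$, item~2 applies instead, so I would state explicitly that the hypothesis selects the branch. I would also note that the matched-only claim uses the tie in the matched scores, which makes the dispersion zero regardless of $\epsilon$.
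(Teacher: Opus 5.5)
Your proposal is correct and follows the paper's proof essentially step for step: you compute the moments of the two-point completed outcomes directly, use $\mu\le c$ to reduce $h(\mu,c)$ to $c-\mu$, and sum against the nonnegative normalized routing weights. The paper adds one remark you could include: since every $Q_i\le c$, $\mu=c$ holds only when all outcomes equal $c$, which identifies the zero-coefficient case with ``ceiling quality.''
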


\begin{proof}
For the mixed group, direct evaluation gives $\sigma^2=m(n-m)s^2/n^2$. Substitution of the matched score $s$ and missing score zero into $(Q_i-\mu)/\sigma$ yields Eq.~(\ref{eq:uap-completion-feedback}); the matched-only scores are constant. For the low-dispersion branch, $0\le\mu\le c$ implies $h(\mu,c)=c-\mu$, proving the signs and special cases. Since every $Q_i\le c$, $\mu=c$ holds only when all outcomes equal $c$. More generally, centered outcomes alone cannot distinguish $r\mathbf1_n$ from $c\mathbf1_n$, as both center to zero; UAP retains their distinct uncentered means. Finally, nonnegative normalized routing weights give Eq.~(\ref{eq:uap-delivery}) by summation.
\end{proof}

With unequal matched scores, omissions receive negative coefficients in the relative branch when $\mu>0$, while a matched coefficient is positive only if its score exceeds $\mu$. In the low-dispersion branch, all coefficients share the ceiling-gap value, even if an individual outcome reaches the ceiling. The routing identities hold before loss masking and composition; mass assigned to inactive tokens does not enter the masked objective.

\subsection{When does SCR direct feedback to effective targets?}
\label{subsec:span-scr}

Assume SCR is enabled. Fix an eligible matched block of length $L\ge2$ with coefficient $A\ne0$, redistribution strength $\beta\in[0,1]$, and pre-rebate cap $D>1$. Its selected mask $S$ targets errors when $A<0$ and correct positions when $A>0$. Assume $p=|S|/L\in(0,1)$. Let $\mathcal E$ be an externally specified target set (no statistical independence from $S$ is assumed), $q=|\mathcal E|/L$ its prevalence, and $\varrho=|S\cap\mathcal E|/|S|$ the mask precision. Thus effectiveness is evaluated against the intended target, rather than defined by membership in $S$.

\begin{proposition}[Validity and target allocation of SCR]
\label{prop:scr-concentration}
Algorithm~\ref{alg:p2tpo-scr}'s cap-and-rebate operation yields
\begin{equation}
d_t=1-\eta+\frac{\eta}{p}\mathbf1[t\in S],\qquad
\eta=\min\{\beta,p(D-1+\beta)\}\in[0,\beta].
\label{eq:span-rule}
\end{equation}
The weights are nonnegative and preserve the signed and absolute source-block budgets:
\begin{equation}
\sum_{t=1}^L Ad_t=LA,\qquad \sum_{t=1}^L|Ad_t|=L|A|.
\label{eq:span-conservation}
\end{equation}
The target's share of absolute coefficient mass satisfies
\begin{equation}
F_{\mathcal E}=\frac{\sum_{t\in\mathcal E}|Ad_t|}{L|A|}
=q+\eta(\varrho-q).
\label{eq:precision-gain}
\end{equation}
It exceeds uniform allocation exactly when $\eta>0$ and $\varrho>q$.
\end{proposition}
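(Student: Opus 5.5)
The plan is to trace Algorithm~\ref{alg:p2tpo-scr} on the two-valued structure of the pre-cap weights. With $N=|S|=pL$, the mixed weights are $d^0_t=1-\beta+\beta/p$ on $S$ and $d^0_t=1-\beta$ off $S$. The off-mask value is at most $1<D$, so only on-mask entries can be capped. I will split into two cases according to whether $1-\beta+\beta/p\le D$.

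\textbf{Case 1: no cap is active.} Here $v=d^0$, whose mean is $(1-\beta)+p\cdot\beta/p=1$. The rebate is therefore zero and $d=d^0$, which has the stated form with $\eta=\beta$. The no-cap condition rearranges to $\beta(1-p)\le p(D-1)$, i.e.\ $\beta\le p(D-1+\beta)$, so indeed $\eta=\min\{\beta,p(D-1+\beta)\}=\beta$.

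\textbf{Case 2: the cap is active.} Now $v_t=D$ on $S$ and $1-\beta$ off $S$. The mean is $pD+(1-p)(1-\beta)$, so the rebate is
\[
\rho=1-pD-(1-p)(1-\beta)=\beta-p(D-1+\beta).
\]
Thus $d_t=1-\beta+\rho$ off $S$ and $D+\rho$ on $S$. Setting $\eta=p(D-1+\beta)$ gives an off-mask value of $1-\eta$ and an on-mask value of $D+\beta-\eta$. The latter should equal $1-\eta+\eta/p$, which holds because $\eta/p=D-1+\beta$. Since this case is exactly $p(D-1+\beta)<\beta$, the min formula is confirmed. In both cases $\eta\ge0$ (since $D>1$) and $\eta\le\beta$, so $\eta\in[0,\beta]$.

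\textbf{Conservation.} Given the closed form, $\sum_t d_t=L(1-\eta)+\eta L=L$. Every $d_t\ge 1-\eta\ge 1-\beta\ge0$, so $|Ad_t|=|A|d_t$, and both identities in Eq.~(\ref{eq:span-conservation}) follow immediately.

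\textbf{Target share.} The absolute mass on $\mathcal E$ is
\[
|A|\Bigl[(1-\eta)|\mathcal E|+\tfrac{\eta}{p}|S\cap\mathcal E|\Bigr].
\]
Dividing by $L|A|$ and using $|S\cap\mathcal E|/(pL)=\varrho$ gives $(1-\eta)q+\eta\varrho=q+\eta(\varrho-q)$. Uniform allocation gives $q$, so the share exceeds uniform iff $\eta(\varrho-q)>0$. Because $\eta\ge0$, this holds exactly when $\eta>0$ and $\varrho>q$.

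\textbf{Main obstacle.} The only delicate step is the cap case: I must check that exactly the on-mask entries are capped and that the rebate algebra matches the $\min$ expression, including the boundary case of equality. The remaining steps are direct summation.
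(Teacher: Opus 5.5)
Your proposal is correct and follows the paper's proof essentially step for step. The only difference is that the paper treats your two cases together by setting $a=\min\{1-\beta+\beta/p,D\}$ and $b=1-\beta$, so the rebated weights become $1+(1-p)(a-b)$ on $S$ and $1-p(a-b)$ off $S$ with $\eta=p(a-b)$; conservation and the target-share identity then follow by the same direct summations you give.
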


\begin{proof}
Before capping, selected and unselected weights are $1-\beta+\beta/p$ and $1-\beta$. Since $D>1$, only selected weights can be capped. Set $a=\min\{1-\beta+\beta/p,D\}$ and $b=1-\beta$. Adding the uniform rebate $1-pa-(1-p)b$ gives selected weight $1+(1-p)(a-b)$ and unselected weight $1-p(a-b)$. With $\eta=p(a-b)=\min\{\beta,p(D-1+\beta)\}$, these equal Eq.~(\ref{eq:span-rule}). Because $0\le\eta\le\beta\le1$, weights are nonnegative, and their sum is $L(1-\eta)+|S|\eta/p=L$. Multiplication by $A$ or $|A|$ proves conservation. Finally,
\[
\frac1L\sum_{t\in\mathcal E}d_t
=q(1-\eta)+\frac{|S\cap\mathcal E|}{L}\frac{\eta}{p}
=q+\eta(\varrho-q).
\]
Subtracting the uniform share $q$ proves the strict-improvement condition.
\end{proof}

Disabling SCR, empty or full masks, ineligible blocks, and zero coefficients leave allocation uniform; $\beta=0$ also gives uniform weights. The cap applies before the rebate, so final weights can exceed $D$; they satisfy $d_t\le D+\beta$. With the target set, mask size, and redistribution parameters fixed, a uniformly random mask has expected precision $q$ and expected target allocation $q$. SCR strictly improves on this expectation when $\eta>0$ and its mask is more precise than the target's prevalence.

For an eligible reference block matched in every valid rollout with identical scores $0\le r<c$, let $\lambda>0$. UAP supplies $A=-\lambda(c-r)$ while matched-only normalization supplies zero. Substituting this coefficient into Eq.~(\ref{eq:precision-gain}), SCR changes the target's absolute mass relative to uniform UAP by $L\lambda(c-r)\eta(\varrho-q)$. This change is strictly positive when $\eta>0$ and $\varrho>q$, linking feedback activation to localization under an informative mask. All conservation and allocation statements concern the source block before composition with page or omission contributions.

\clearpage
\begin{samepage}
\section{Qualitative Case Studies}
\label{app:case-studies}
\label{app:recovery-case}

Figures~\ref{fig:goodcase-omission-ejcts}--\ref{fig:goodcase-omission-chemistry} supplement Section~\ref{sec:credit-analysis} with selected OmniDocBench v1.6 outputs from separate GRPO and P2TPO runs after 100 training updates. Each figure places the source page on the left and reference excerpts, generated excerpts, and per-page edit distances (ED; lower is better) on the right. References come from the evaluator's ground-truth records, and generated excerpts come from saved model outputs. Ellipses indicate abridgment; italic annotations identify omitted content and are not generated text. These examples illustrate output differences without isolating individual credit-assignment components.
\par
\end{samepage}

\begin{figure}[H]
\centering
\begin{minipage}[t]{0.37\linewidth}
\vspace{0pt}
\includegraphics[width=\linewidth,height=0.48\textheight,keepaspectratio]{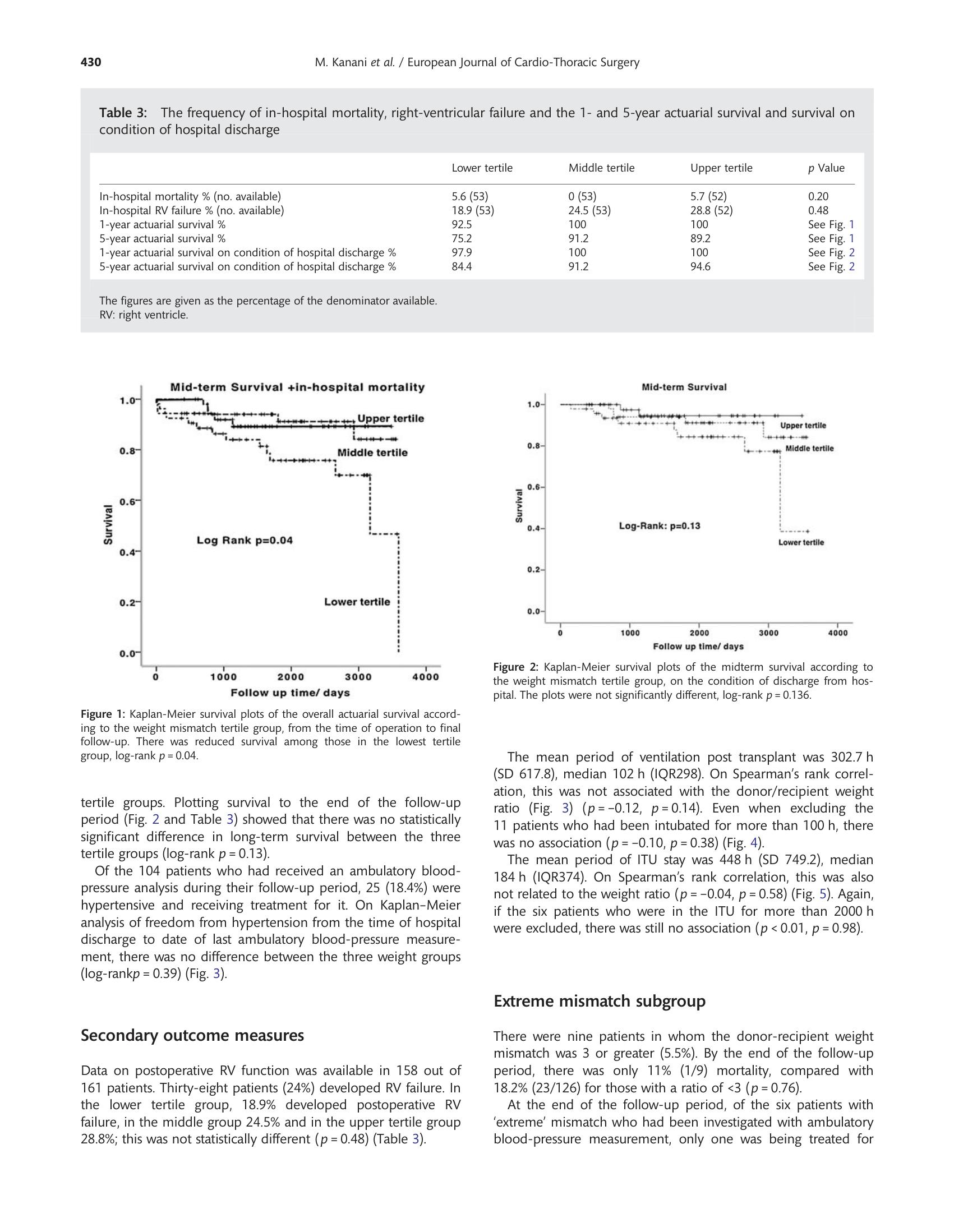}
\end{minipage}\hfill
\begin{minipage}[t]{0.59\linewidth}
\vspace{0pt}
\small\raggedright
\textbf{Clinical article: omission of a heading and paragraph}\par\smallskip
\textbf{Reference excerpt}\par
\texttt{Secondary outcome measures}\par
\texttt{Data on postoperative RV function was available in 158 out of 161 patients. Thirty-eight patients (24\%) developed RV failure. \ldots}\par\smallskip
\textbf{GRPO output excerpt}\par
\texttt{Figure 1: Kaplan-Meier survival plots \ldots}\par
\emph{The intervening text, including the heading and paragraph shown above, is absent.}\par
\texttt{Figure 2: Kaplan-Meier survival plots of the midterm survival \ldots}\par\smallskip
\textbf{P2TPO output excerpt}\par
\texttt{Secondary outcome measures}\par
\texttt{Data on postoperative RV function was available in 158 out of 161 patients. Thirty-eight patients (24\%) developed RV failure. \ldots}\par\smallskip
\textbf{Comparison.} GRPO skips the displayed heading and paragraph between two figure captions; P2TPO transcribes both. The table edit distance is unchanged.\par\smallskip
\setlength{\tabcolsep}{4pt}
\begin{tabularx}{\linewidth}{@{}Xrr@{}}
\toprule
Metric $\downarrow$ & GRPO & P2TPO \\
\midrule
Text ED & 0.4780 & 0.0000 \\
Reading-order ED & 0.5000 & 0.0000 \\
Table ED & 0.0663 & 0.0663 \\
\bottomrule
\end{tabularx}
\end{minipage}
\caption{Qualitative comparison of heading and paragraph transcription in a clinical article from OmniDocBench v1.6 ({\urlstyle{same}\nolinkurl{scihub_j.ejcts.2011.04.042.pdf_3}}).}
\label{fig:goodcase-omission-ejcts}
\end{figure}

\begin{figure}[H]
\centering
\begin{minipage}[t]{0.37\linewidth}
\vspace{0pt}
\includegraphics[width=\linewidth,height=0.48\textheight,keepaspectratio]{}
\end{minipage}\hfill
\begin{minipage}[t]{0.59\linewidth}
\vspace{0pt}
\small\raggedright
\textbf{Magazine article: omission within a multi-column page}\par\smallskip
\textbf{Reference excerpt}\par
\texttt{It looks like the future}\par
\texttt{Along with Chennai, Tamil Nadu's capital, Bangalore and Hyderabad are also big destinations for foreign companies establishing ``global capability centres'' \ldots}\par\smallskip
\textbf{GRPO output excerpt}\par
\texttt{Monthly consumer spending per person\ldots}\par
\emph{The subheading and accompanying text shown above are absent.}\par
\texttt{The BJP's challenge is particularly acute in Tamil Nadu. \ldots}\par\smallskip
\textbf{P2TPO output excerpt}\par
\texttt{It looks like the future}\par
\texttt{Along with Chennai, Tamil Nadu's capital, Bangalore and Hyderabad are also big destinations for foreign companies establishing ``global capability centres'' \ldots}\par\smallskip
\textbf{Comparison.} GRPO omits the displayed subheading and accompanying text, resuming with the paragraph beginning ``The BJP's challenge.'' P2TPO retains the passage.\par\smallskip
\setlength{\tabcolsep}{4pt}
\begin{tabularx}{\linewidth}{@{}Xrr@{}}
\toprule
Metric $\downarrow$ & GRPO & P2TPO \\
\midrule
Text ED & 0.2721 & 0.0153 \\
Reading-order ED & 0.4286 & 0.0000 \\
\bottomrule
\end{tabularx}
\end{minipage}
\caption{Qualitative comparison of text coverage in a multi-column magazine page from OmniDocBench v1.6 ({\urlstyle{same}\nolinkurl{magazine_TheEconomist.2024.03.02_page_032}}).}
\label{fig:goodcase-omission-economist}
\end{figure}

\begin{figure}[H]
\centering
\begin{minipage}[t]{0.37\linewidth}
\vspace{0pt}
\includegraphics[width=\linewidth,height=0.48\textheight,keepaspectratio]{}
\end{minipage}\hfill
\begin{minipage}[t]{0.59\linewidth}
\vspace{0pt}
\small\raggedright
\textbf{Chemistry article: omissions and mathematical notation}\par\smallskip
\textbf{Reference excerpt}\par
\texttt{Measurements}\par
\texttt{Elemental analysis was performed with a Elementar-CHNO-Rapiol to determine the carbon, nitrogen, and hydrogen content. \ldots}\par\smallskip
\textbf{GRPO output excerpt}\par
\texttt{2,855 cm -1 . 1 H NMR(CDCl 3 ): \ldots}\par
\emph{The ``Measurements'' heading and elemental-analysis paragraph are absent.}\par\smallskip
\textbf{P2TPO output excerpts}\par
\texttt{2,855 cm\$\^{}\{-1\}\$. \ldots}\par
\texttt{Measurements}\par
\texttt{Elemental analysis was performed with a Elementar-CHNO-Rapiol to determine the carbon, nitrogen, and hydrogen content. \ldots}\par\smallskip
\textbf{Comparison.} P2TPO retains the displayed ``Measurements'' heading and paragraph and uses inline \LaTeX{} notation for the superscript. The corresponding GRPO excerpt omits the passage and renders the superscript as plain text. The per-page metrics below show the accompanying text and reading-order differences and retain the table result for context.\par\smallskip
\setlength{\tabcolsep}{4pt}
\begin{tabularx}{\linewidth}{@{}Xrr@{}}
\toprule
Metric $\downarrow$ & GRPO & P2TPO \\
\midrule
Text ED & 0.4625 & 0.0427 \\
Reading-order ED & 0.5833 & 0.0000 \\
Table ED & 0.0734 & 0.1152 \\
\bottomrule
\end{tabularx}
\end{minipage}
\caption{Qualitative comparison of text coverage and superscript notation in a chemistry article from OmniDocBench v1.6 ({\urlstyle{same}\nolinkurl{docstructbench_llm-raw-scihub-o.O-s00396-003-0935-9.pdf_2}}).}
\label{fig:goodcase-omission-chemistry}
\end{figure}

\end{CJK*}
\end{document}